\newtheorem{theorem}{Theorem}
\newtheorem{lemma}[theorem]{Lemma}
\newtheorem{remark}{Remark}
\title{$p$-Mean Regret for Stochastic Bandits}
\author{
    Anand Krishna\textsuperscript{\rm 1},
    Philips George John\textsuperscript{\rm 2},
    Adarsh Barik\textsuperscript{\rm 3},
    Vincent Y. F. Tan\textsuperscript{\rm 4, 1}
}
\newcommand{\p}{$p$}
\newcommand{\E}{\mathbb{E}}
\begin{document}

\maketitle

\begin{abstract}
In this work, we extend the concept of the $p$-mean welfare objective from social choice theory~\cite{moulin2004fair} to study $p$-mean regret in stochastic multi-armed bandit problems. The $p$-mean regret, defined as the difference between the optimal mean among the arms and the $p$-mean of the expected rewards, offers a flexible framework for evaluating bandit algorithms, enabling algorithm designers to balance fairness and efficiency by adjusting the parameter $p$. Our framework encompasses both average cumulative regret and Nash regret as special cases.
We introduce a simple, unified UCB-based algorithm (\textsc{Explore-Then-UCB}) that achieves novel $p$-mean regret bounds. Our algorithm consists of two phases: a carefully calibrated uniform exploration phase to initialize sample means, followed by the UCB1 algorithm of~\citet{auer2002finite}. Under mild assumptions, we prove that our algorithm achieves a $p$-mean regret bound of $\tilde{O}\left(\sqrt{\frac{k}{T^{\frac{1}{2|p|}}}}\right)$ for all $p \leq -1$, where $k$ represents the number of arms and $T$ the time horizon. When $-1<p<0$, we achieve a regret bound of $\tilde{O}\left(\sqrt{\frac{k^{1.5}}{T^{\frac{1}{2}}}}\right)$. For the range $0< p \leq 1$, we achieve a $p$-mean regret scaling as $\tilde{O}\left(\sqrt{\frac{k}{T}}\right)$, which matches the previously established lower bound up to logarithmic factors~\cite{auer1995gambling}. This result stems from the fact that the $p$-mean regret of any algorithm is at least its average cumulative regret for $p \leq 1$.
In the case of Nash regret (the limit as $p$ approaches zero), our unified approach differs from prior work~\cite{barman2023fairness}, which requires a new Nash Confidence Bound algorithm. Notably, we achieve the same regret bound up to constant factors using our more general method.
\end{abstract}

%

\section{Introduction}\label{sec:intro}

The multi-armed bandit (MAB) problem has long served as a cornerstone in the study of sequential decision-making under uncertainty~\cite{thompson1933likelihood, lai1985asymptotically,bubeck2012regret}. In the stochastic MAB framework, a decision-maker interacts with a set of $k$ arms, each associated with an unknown probability distribution of rewards. Over $T$ rounds, the algorithm selects an arm in each round and receives a reward drawn from that arm's distribution. Ideally, if the distributions were known, the optimal strategy would be to always choose the arm with the highest expected reward. However, because the statistical properties of the arms are unknown, the algorithm's selections may not always be optimal, leading to suboptimal rewards. This performance loss is quantified by the concept of regret, which measures the difference between the optimal mean and the algorithm's performance. Regret thus serves as a performance metric, reflecting the algorithm's efficiency in learning and adapting to the best choices over time. Traditional approaches to MAB problems have primarily focused on maximizing cumulative rewards, often overlooking crucial considerations of fairness and social welfare. As machine learning algorithms increasingly influence resource allocation and decision-making in societally impactful domains—such as healthcare~\cite{villar2015multi}, education~\cite{clement2013multi}, and online advertising~\cite{schwartz2017customer}—there is a growing imperative to incorporate principles of fairness and social welfare into our algorithmic frameworks. 



The MAB literature has extensively studied two main regret formulations: average regret, which measures the difference between the optimal mean and the arithmetic mean of the rewards from the chosen arms over time~\cite{lattimore2020bandit}, and simple regret, which quantifies the expected suboptimality of the final recommended arm, ~\cite{bubeck2009pure}. However, these traditional formulations often fall short in addressing the nuanced fairness considerations that arise in many real-world applications. To appreciate the relevance of a more comprehensive perspective, consider settings where the algorithm's rewards correspond to values distributed across a population of $T$ agents. One such scenario can be found in the context of a large-scale renewable energy initiative. For example, consider a government program testing different types of residential solar panel installations across a diverse region. In each round $t \in \{1, ..., T\}$, a new household is selected to receive a solar panel installation, and the program must choose one of $k$ available solar panel technologies to install for the $t$-th household. The observed reward could be the energy output efficiency of the installed system over a fixed period. While there might be slight variations due to factors like roof orientation or local weather patterns, there is likely one solar panel technology that performs best on average across the entire region. This scenario encapsulates the challenge of balancing between exploiting the currently best-known technology and exploring other options to ensure a potentially superior technology has not been overlooked, all while ensuring fair treatment for each household regardless of when they join the program.

In the context of such applications, the average regret equates the algorithm's performance to the social welfare across all $T$ households. However, this utilitarian approach might not induce a fair outcome, as high social welfare can be achieved even if less efficient solar technologies are inconsiderately installed for an initial set of households. On the other hand, simple regret maps to the egalitarian (Rawlsian) welfare but only after excluding an initial set of households that served as test subjects for various technologies.
These limitations highlight the need for a more flexible and comprehensive framework that can balance fairness and efficiency in MAB settings where the algorithm's rewards correspond to agents' values (e.g., energy output efficiency, household energy savings, or overall carbon footprint reduction). Such a framework should ensure that households joining the program at different times all benefit from increasingly effective technology choices while also allowing for necessary exploration to identify the best solar panel technology for the region.


To address these shortcomings, we study a novel approach that incorporates the concept of \p-mean welfare~\cite{moulin2004fair} from social choice theory into the MAB framework. The \p-mean welfare, also known as the generalized mean or power mean, lends itself as a flexible metric for aggregating individual utilities or rewards. It is defined as: 
$$ W_p = \left( \frac{1}{T} \sum_{t=1}^{T} u_i^p \right)^{\frac{1}{p}},$$ where $p$ is a parameter that determines the sensitivity of the welfare measure to individual utilities $u_i\in \mathbb{R}_{+}$ of each agent $t\in [T]$ . 


This formulation allows for a spectrum of welfare considerations, ranging from the utilitarian approach (emphasizing total utility) to the Rawlsian approach (emphasizing the welfare of the worst-off individual). By adjusting the parameter $p$, one can tailor the welfare measure to reflect different societal preferences. This generalized framework is axiomatically well-supported. Notably, for $p \leq 1$, it adheres to the Pigou-Dalton principle, i.e., redistributing a small portion of reward from an agent with higher utility to one with lower utility tends to enhance overall welfare. The magnitude of this effect intensifies as $p$ decreases, reflecting a stronger emphasis on equity.
Concurrently, the $p$-mean welfare construct allows for a nuanced approach to allocative efficiency. As $p$ increases, the function becomes more sensitive to aggregate reward maximization.
By offering this continuous spectrum between utilitarian and egalitarian objectives, with Nash Social Welfare~\cite{kaneko1979nash} situated as an intermediate case, the $p$-mean welfare framework empowers researchers and decision-makers to fine-tune the balance between efficiency and fairness, tailoring their approach to the specific ethical considerations and practical constraints of diverse application scenarios.

Motivated by the flexibility and comprehensiveness of \p-mean welfare, we utilize this concept in bandit problems by defining \p-mean regret. This metric is given by 
\begin{align}\label{eqn:p-mean}
    R_p = \mu^* - \left( \frac{1}{T} \sum_{t=1}^T \mathbb{E}[\mu_{I_t}]^p \right)^{\frac{1}{p}}
\end{align}  
where \( I_t \) is a random variable representing the arm pulled in round $t$, $ \mu_{I_t}$ is the expected reward of the arm $I_t$, and  $\mu^*$ is the expected reward of the optimal arm. The $p$-mean regret measures the difference between the expected reward of the optimal arm and the \p-mean of the expected rewards of the chosen arms.

The introduction of $p$-mean regret allows for a more nuanced evaluation of bandit algorithms. By varying \( p \), we can capture different aspects of the regret distribution. For instance, when \( p \to 1 \), the $p$-mean regret approaches the traditional regret measure, focusing on the average performance. When \( p \to -\infty \), it emphasizes the performance of the worst-off rounds, analogous to the Rawlsian welfare function. Conversely, when \( p \to +\infty \), it highlights the best-performing rounds, reflecting a more utilitarian perspective without any fairness guarantees. For $p\to 0$, our formulation converges to the Nash regret that was previously studied by~\citet{barman2023fairness}, which is defined as the difference between the optimal mean $\mu^*$ and the geometric mean of the expected rewards. This special case aligns with the Nash social welfare (NSW), an axiomatically-supported welfare objective that satisfies fundamental principles including symmetry, independence of unconcerned agents, scale invariance, and the Pigou-Dalton transfer principle~\cite{moulin2004fair}.

Incorporating $p$-mean regret into the analysis of bandit problems provides a richer understanding of algorithm performance, accounting for diverse criteria of fairness and efficiency. This approach not only aligns with the broader objectives of social choice theory but also offers a robust framework for designing and evaluating bandit algorithms in various practical scenarios.\\

\begin{table}[ht!]
\caption{Summary of results\label{tab:results}}
\begin{tabular}{cll}
\toprule
$p$-range             & Explore period $\Tilde{T}$ & Regret\\
\midrule
$\begin{array}{c} p \in (0, 1] \\ \mbox{(Theorem \ref{thm:positive-regret})} \end{array}$ 
& $16 \sqrt{\frac{T k^p \log T}{\log k}}$ & $\Tilde{O}\left(\sqrt{\frac{k}{T}}\right)$\\
\midrule
$\begin{array}{c} p \to 0 \mbox{ (Nash)}\\ \mbox{(Theorem \ref{thm:nash-regret})}\end{array}$ & $16 \sqrt{\frac{T k\log T}{\log k}}$ & $\Tilde{O}\left(\sqrt{\frac{k}{T}}\right)$\\
\midrule
$\begin{array}{c} p \in [-1, 0)\\ \mbox{(Theorem \ref{thm:negative-regret})}\end{array}$        &  $16 \sqrt{\frac{T\log T}{k^{|p|}}}$ & $\Tilde{O}\left(k^{\frac{3}{4}}{T}^{\frac{-1}{4}}\right)$\\
\midrule
$\begin{array}{c} p \in (-\infty, -1)\\ \mbox{(Theorem \ref{thm:negative-regret})}\end{array}$ & $16 \sqrt{\frac{T\log T}{k^{|p|}}}$ & $\Tilde{O}\left(k^{\frac{1}{2}}{T}^{\frac{-1}{4|p|}}\right)$\\
\bottomrule
\end{tabular}
\end{table}

\section{Our Contributions}

We build upon the work of~\citet{barman2023fairness}, where they develop a new Nash Confidence Bound (NCB) algorithm specifically for Nash regret, i.e., the $p\to 0$ case. In their work, the authors present a stochastic MAB instance where the vanilla UCB1 algorithm incurs a Nash regret of  $1-\frac{1}{T}$. This counterexample illustrates that while UCB is effective for traditional regret metrics, it may not perform well when evaluated using alternative metrics that incorporate fairness or welfare considerations. 

In this work, we develop an algorithm for any $p\leq 1$ by introducing a mild yet significant assumption that every arm has a minimum expected reward of $\Tilde{\Omega}(\sqrt{k/T^{\frac{1}{4}}})$, which allows us to bypass the counter-example where UCB fails. This approach contrasts with the solution proposed in the aforementioned paper where they required a specialized algorithm for handling the case of Nash regret.


Our UCB-based algorithm achieves vanishing $p$-mean regret, i.e., $o(1)$ in the regime where $-\Omega(\log T)<p\leq 1$. As $p$ decreases beyond $-T\log T$, the $p$-mean welfare for any bandit instance becomes a constant factor approximation of egalitarian welfare. Consequently, achieving vanishing $p$-mean regret is not possible with even two arms for $p\leq -T\log T$ (see Proposition 1 in~\citep{barman2020tight}). 

In summary, we explore the concept of \p-mean regret, inspired by \p-mean welfare, and demonstrate the effectiveness of a UCB-based algorithm under this new metric. By overcoming the limitations highlighted in previous work through a minimal assumption, we offer a comprehensive solution that balances fairness and efficiency in the evaluation of bandit algorithms.

We make the following technical contributions.
\begin{itemize}
    \item We study \(p\)-mean regret in stochastic multi-armed bandit problems, providing a flexible framework for evaluating bandit algorithms that balances fairness and efficiency via the parameter \(p\).
    \item We propose \textsc{Explore-Then-UCB}, a unified algorithm designed to achieve novel \(p\)-mean regret bounds. This algorithm consists of two distinct phases: an initial phase involving a calibrated uniform exploration followed by the UCB1 algorithm.
    \item We prove that our algorithm achieves a \(p\)-mean regret bound of \(\tilde{O}\left(\sqrt{\frac{k}{T^{\frac{1}{2|p|}}}}\right)\) for all \(p \leq -1\), where \(k\) is the number of arms and \(T\) is the time horizon and for the range $-1<p<0$, we achieve a regret bound of  \(\tilde{O}\left(\sqrt{\frac{k^{\frac{3}{2}}}{T^{\frac{1}{2}}}}\right)\).
    \item For the special case of Nash regret (\(p \to 0\)), our approach achieves the same regret bound as prior work \cite{barman2023fairness}, but with a simpler algorithm.
    \item For \(0 < p \leq 1\), we show that the \(p\)-mean regret scales as \(\tilde{O}\left(\sqrt{\frac{k}{T}}\right)\), matching the previously proven minimax lower bound that applies to this range via the generalized mean inequality.
\end{itemize}


\section{Related Work}
The incorporation of fairness considerations into MAB problems has garnered significant attention in recent years, driven by the increasing deployment of learning algorithms in domains with far-reaching social implications.


\paragraph{Fairness in Multi-Armed Bandits} Recent works have examined various notions of fairness in MAB contexts. \citet{joseph2016fairness}, \citet{celis2019controlling}, and \citet{patil2021achieving} primarily focused on ensuring fairness for the arms themselves. In a multi-agent setting, \citet{hossain2021fair} and \citet{jones2023efficient} studied scenarios where each arm pull yields potentially different rewards for each agent, aiming to find a fair distribution over arms. While these approaches highlight the importance of fairness in MAB settings, our work diverges by ensuring fairness across time, treating each round as a distinct agent deserving of fair treatment.


\citet{barman2023fairness} introduced the concept of Nash regret and developed the Nash Confidence Bound algorithm to minimize it for stochastic MAB settings. This algorithm provides tight Nash regret guarantees for both known and unknown time horizons (including \(T\)-oblivious settings). Our research extends this work by studying the more general \(p\)-mean regret, which allows for a flexible balance between fairness and efficiency, with Nash regret as a special case when \(p\) approaches 0.



\paragraph{p-Mean Welfare and Fair Division}

The $p$-mean welfare objective has been extensively studied in fair division research, an area at the interface of mathematical economics and computer science. As characterized in social choice theory~\cite{moulin2004fair}, the \(p\)-mean welfare function provides a parameterized framework to balance equity and efficiency~\cite{barman2020tight, garg2021tractable, barman2022universal,eckart2024fairness}. This family of welfare functions is defined by five natural axioms: anonymity, scale invariance, continuity, monotonicity, and symmetry, ensuring that it reflects various principles of fairness in resource allocation. Moreover, the Pigou-Dalton principle---where a transfer from a better-off individual to a worse-off one improves welfare---restricts \(p\) to be less than or equal to 1. Our work leverages this rich theoretical foundation, avoiding the need for ad-hoc fairness constraints. 



\paragraph{Other related work} 
Several recent works have explored fairness concepts in domains adjacent to our research, highlighting the growing interest in incorporating fairness considerations into various learning algorithms. \citet{sawarni2024nash} studied Nash regret for stochastic linear bandits, proving tight upper bounds for sub-Poisson rewards. Their work extends the concept of Nash regret to more complex bandit settings, complementing our generalization to \(p\)-mean regret in the MAB setup.

The work of~\citet{zhang2024no} investigated online multi-agent Nash social welfare (NSW) maximization. Their setting, where the learner's decision affects multiple agents simultaneously, differs from our round-by-round fairness approach but underscores the importance of considering fairness in online decision-making processes.

In a multi-agent reinforcement learning context, \citet{mandal2022socially} adopt an axiomatic approach, demonstrating that Nash Social Welfare (NSW) uniquely satisfies certain fairness axioms and provides regret bounds derived for policies maximizing various fair objectives. These approaches highlight the growing interest in incorporating fairness metrics like NSW into various learning algorithms, from bandits to complex Markov decision process environments.

\section{Preliminaries}

In the current work, we consider the stochastic MAB problem. Here, we are presented with $k$ unknown probability distributions (arms) each with mean $\mu_i\geq\mu_{\min}$ each supported on the interval $[0, 1]$. Our goal is to design a learning algorithm that adaptively uses sample access (rewards) to these arms over a time horizon $T$ to minimize the $p$-mean regret. This setup corresponds to an agent arriving in every round $t\in [T]$ of the algorithm, which receives a reward $X_t$ based on the arm that the (decision maker) algorithm pulls.

In our approach to \p-mean regret minimization, we employ a dual-phase strategy that combines uniform exploration with the Upper Confidence Bound (UCB1) algorithm. Here, we provide a brief overview of UCB1, introduced by ~\citet{auer2002finite}. It is renowned for its effectiveness in balancing exploration and exploitation in stochastic multi-armed bandit (MAB) problems.

The Upper Confidence Bound (UCB1) algorithm is a fundamental method for solving MAB problems, embodying the principle of `optimism in the face of uncertainty'. UCB1 skillfully balances exploration and exploitation by constructing optimistic estimates of each action's expected payoff. At its core, UCB1 selects the action with the highest upper confidence bound, calculated as \(\hat{\mu}_j + 4\sqrt{\frac{\log T}{n_j}}\), where \(\hat{\mu}_j\) is the empirical mean payoff of arm \(j\) and \(n_j\) is the number of times arm \(j\) has been played. 


By setting the confidence term to 4\(\sqrt{\frac{\log T}{n_j}}\), UCB1 ensures that the probability of the true mean exceeding this bound decreases as \(O(T^{-4})\), rapidly converging to zero as the rounds progress. This approach   addresses the exploration-exploitation dilemma: actions with high empirical means are favored (exploitation), while rarely-played actions maintain high upper bounds due to uncertainty (exploration).


The elegance of UCB1 lies in its simplicity and strong theoretical guarantees, achieving worst-case average regret bound of $O(\sqrt{\frac{k \log T}{T}})$ without prior knowledge of reward distributions. This makes it particularly suitable for our \p-mean regret minimization context.


\section{The \textsc{Explore-Then-UCB} Algorithm}
Our algorithm begins with a uniform exploration phase, which serves to establish an empirical foundation by collecting initial reward estimates for each arm. This phase is crucial in the \p-mean regret context, where fairness is guaranteed over the entire time horizon. This phase is technically motivated by our use of a stronger property for the confidence-bound (as detailed in Lemmas \ref{lem:ucb_correctness} and \ref{lem:phase_two_good_arms}) in Phase~II, compared to the standard UCB1 analysis. However, striking a balance is crucial, as an overly lengthy exploration phase would negatively impact fairness for a large subset of agents. Consequently, for more negative values of $p$, we employ a  shorter exploration phase to maintain fairness while still benefiting from the initial uniform exploration.



Following this calibrated exploration, the algorithm transitions to UCB1, capitalizing on the gathered data to drive the decision-making process. UCB1 dynamically balances exploration and exploitation based on the observed rewards, adapting effectively to the underlying stochastic environment while ensuring controlled worst-case regret.
This dual-phase strategy achieves novel $p$-mean regret bounds by combining an initial phase of uniform exploration with the proven effectiveness of UCB1. The pseudocode of our algorithm is given in Algorithm \ref{alg:ucb_with_expl}.

\begin{algorithm}
	\caption{The \textsc{Explore-Then-UCB}\label{alg:ucb_with_expl}}
	\begin{algorithmic}
		\State{{\bf Parameters:} Time horizon $T$, number of arms $k$, exploration period $\Tilde{T}$.}
		\For{$t \gets 1,\ldots,\Tilde{T}$}
		\State{Uniformly sample $i_t$ from $[k]$.}
		\State{Pull arm $i_t$ and observe the reward $X_t$.}
		\State{Increment $n_{i_t,t}$ by one and update $\widehat{\mu}_{i_t,t}$}
	\EndFor
	\For{$t \gets \Tilde{T}+1,\ldots,T$}
		\State{Let $\mathsf{UCB}_{i,t-1} \triangleq \widehat{\mu}_{i,t-1} + 4\sqrt{\frac{\log T}{n_{i,t-1}}}$.}
		\State{Select $i_t \in \arg\max_{i \in [k]} \mathsf{UCB}_{i,t-1}$.}
		\State{Pull arm $i_t$ and observe the reward $X_t$.}
		\State{Update $n_{i_t,t}$ and $\widehat{\mu}_{i_t,t}$.}
	\EndFor
	\end{algorithmic}
\end{algorithm}


In this section we show that the \textsc{Explore-then-UCB} algorithm, when configured with suitable exploration periods $\Tilde{T}$, achieves appropriate $o(1)$ regret bounds when $-\Tilde{\Omega}(\log(T))<p\leq 1$. To establish these bounds, we employ the following assumption on the expected rewards.


\begin{restatable}{assumption}{restateassumpsuffposrewa}\label{assump:sufficiently_positive_rewards}
For all arms $i \in [k]$, we have that the expected reward $\mu_i \geq 32\sqrt{\frac{{k\log T\sqrt{\log k}}}{T^{1/4}}}$.
\end{restatable}

In the context of our asymptotic analysis (as $T \to \infty$),  this assumption essentially reduces to a positivity constraint on rewards, i.e., $\mu_i > 0$ for all $i\in [k]$ since we can consider $T$ to be sufficiently large. We also rely on the standard assumption that the exploration period $\Tilde{T}$, and consequently the time horizon \( T \), is sufficiently large compared to \( k \).

\begin{restatable}{assumption}{restateassumpsufflargexplor}\label{assump:sufficiently_large_exploration}
For all bandit instances with $k$ arms where we learn for $T$ time steps, the choice of exploration period $\Tilde{T}$ satisfies $\Tilde{T} \geq 8k\log(Tk) + 16\sqrt{\frac{\sqrt{T}}{\log k}}$.
\end{restatable}

\begin{restatable}{remark}{restateremminttildebound}\label{rem:mu_min_t_tilde_bound}
A consequence of assumptions \ref{assump:sufficiently_positive_rewards} and \ref{assump:sufficiently_large_exploration} is that $\mu_{i} \geq 128\sqrt{\frac{k \log T}{\Tilde{T}}}$ for each $i \in [k]$.
\end{restatable}

Note that, as long as $T$ is sufficiently large (say $T \geq 8k^2\log(k)$) (see Table \ref{tab:results}):
\begin{enumerate}[(i)]
    \item Our choice of $\Tilde{T}$s for $p \in [-1, 1]$ satisfies Assumption \ref{assump:sufficiently_large_exploration} automatically.
    \item Our choice of $\Tilde{T}$ for $p < -1$ satisfies Assumption \ref{assump:sufficiently_large_exploration} as long as $|p| \leq \frac{\log T}{2\log k}$.
\end{enumerate}

\subsection{Regret Analysis}

Analogous to the events used in the analysis of NCB in \cite{barman2023fairness}, we define `good events' $G_1, G_2$ and set $G \triangleq G_1 \cap G_2$. Let $\hat{\mu}_{i,s}$ denote the empirical mean of arm $i$’s rewards \emph{after} seeing $s$ samples from arm $i$, and let $n_{i,t}$ denote the number of times arm $i$ is pulled in steps $\{1,\ldots,t\}$. 

\begin{itemize}
	\item {$G_1$: Every arm $i \in [k]$ is sampled at least $\frac{\tilde{T}}{2k}$ times in Phase I, i.e. $n_{i,\tilde{T}} \geq \frac{\tilde{T}}{2k}$.}
	\item {$G_2$: For all arms $i \in [k]$ (under Assumption \ref{assump:sufficiently_positive_rewards}), and for all sample counts $s$ such that $\frac{\tilde{T}}{2k} \leq s \leq T$, we have $|\mu_i-\widehat{\mu}_{i,s}| \leq  2 \sqrt{\frac{\log T}{s}}$.}
\end{itemize}

Here, we represent all the events in the canonical bandit model~\cite{lattimore2020bandit}. In the following lemma, we show that event $G$ holds with high probability. 
\begin{lemma}\label{lem:good_event_prob}
As long as Assumptions \ref{assump:sufficiently_positive_rewards} and \ref{assump:sufficiently_large_exploration} are satisfied, $P(G) \geq (1- \frac{2}{T})$, where $G = G_1 \cap G_2$.
\end{lemma}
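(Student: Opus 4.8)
The plan is to bound $P(G^c) = P(G_1^c \cup G_2^c) \le P(G_1^c) + P(G_2^c)$ and show each term is at most $\tfrac{1}{T}$.

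For $G_1$: during Phase~I each of the $\tilde{T}$ rounds independently pulls a uniformly random arm, so $n_{i,\tilde{T}}$ is a $\mathrm{Binomial}(\tilde{T}, 1/k)$ random variable with mean $\tilde{T}/k$. I would apply a multiplicative Chernoff bound to get $P\big(n_{i,\tilde{T}} < \tfrac{\tilde{T}}{2k}\big) \le \exp\big(-\tfrac{\tilde{T}}{8k}\big)$, then union bound over the $k$ arms to get $P(G_1^c) \le k\exp(-\tilde{T}/(8k))$. Assumption~\ref{assump:sufficiently_large_exploration} gives $\tilde{T} \ge 8k\log(Tk)$, which makes $k\exp(-\tilde{T}/(8k)) \le k \cdot \tfrac{1}{Tk} = \tfrac{1}{T}$. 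One subtlety: the empirical mean bound in $G_2$ is stated over sample counts $s$ starting at $\tilde{T}/(2k)$, which only makes sense when $G_1$ holds; I would phrase $G_2$ (or its failure probability) purely in terms of the canonical bandit model's i.i.d.\ reward tape for each arm, so that the concentration statement is independent of how many pulls actually occurred.

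For $G_2$: for a fixed arm $i$ and a fixed sample count $s$, $\hat\mu_{i,s}$ is the average of $s$ i.i.d.\ $[0,1]$-bounded rewards, so Hoeffding's inequality gives $P\big(|\mu_i - \hat\mu_{i,s}| > 2\sqrt{\tfrac{\log T}{s}}\big) \le 2\exp(-8\log T) = 2T^{-8}$. Union bounding over all arms $i \in [k]$ and all integer sample counts $s \in \{1,\ldots,T\}$ (a superset of the range in $G_2$) yields $P(G_2^c) \le 2kT \cdot T^{-8} = 2kT^{-7} \le \tfrac{1}{T}$ for $T$ large relative to $k$ (which is guaranteed by the assumptions / the $T \ge 8k^2\log k$ stipulation). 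Combining, $P(G^c) \le \tfrac{2}{T}$, i.e.\ $P(G) \ge 1 - \tfrac{2}{T}$.

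I do not anticipate a deep obstacle here — it is a routine concentration-plus-union-bound argument. The only genuine care needed is (a) getting the constants in the Chernoff/Hoeffding bounds to line up with the explicit $8k\log(Tk)$ term in Assumption~\ref{assump:sufficiently_large_exploration} and the factor $4$ in the confidence radius versus the factor $2$ appearing in $G_2$, and (b) making the $G_2$ event a statement about the reward tape that is logically prior to (and hence independent of) the realized pull counts, so that conditioning issues between $G_1$ and $G_2$ do not arise. Where exactly Assumption~\ref{assump:sufficiently_positive_rewards} enters is only through the fact that $G_2$ is quantified "under Assumption~\ref{assump:sufficiently_positive_rewards}" — it is not needed for the concentration itself but is carried along because later lemmas invoke $G_2$ together with the reward lower bound; I would simply note this and not use it in the probability computation.
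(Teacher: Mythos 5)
Your proposal is correct and follows essentially the same route as the paper: a multiplicative Chernoff bound plus union bound for $G_1$ (using $\tilde{T}\geq 8k\log(Tk)$ from Assumption~\ref{assump:sufficiently_large_exploration}), Hoeffding with confidence radius $2\sqrt{\log T/s}$ plus a union bound over arms and sample counts for $G_2$, and a final union bound giving $P(G)\geq 1-\frac{2}{T}$. Your side remarks --- that the events should be read on the canonical bandit model's reward tape to avoid conditioning between $G_1$ and $G_2$, and that Assumption~\ref{assump:sufficiently_positive_rewards} plays no role in the probability computation itself --- match what the paper does implicitly.
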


We can now prove the following lemma, which is analogous to Lemma 3 in \cite{barman2023fairness}, but with the UCB bound rather than NCB.

\begin{restatable}[UCB correctness]{lemma}{restatelemucbcorrectness}\label{lem:ucb_correctness}
	Let $\mathrm{UCB}_{i^{*}, t}$ be the upper confidence bound of the optimal arm  $i^{*}$ after round $t$ . Assume that the good event $G$ holds. Then, for all rounds $t > \widetilde{T}$ (i.e., for all rounds in Phase II), we have $\mathrm{UCB}_{i^{*}, t} \geq \mu^{*}$
\end{restatable}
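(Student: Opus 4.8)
The plan is to exploit the good event $G$, which via $G_2$ guarantees that the empirical mean of every arm (including the optimal arm $i^*$) stays close to its true mean for all sample counts $s$ in the range $\tilde T/(2k) \le s \le T$. Concretely, under $G$ we have $\widehat\mu_{i^*,s} \ge \mu^* - 2\sqrt{\log T / s}$ for every such $s$. Since Phase~I guarantees (via $G_1$) that $n_{i^*,\tilde T} \ge \tilde T/(2k)$, the number of samples of $i^*$ at every round $t > \tilde T$ lies in the valid range, so this concentration bound applies throughout Phase~II.

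The key computation is then to add the exploration bonus. For any round $t > \tilde T$, writing $n = n_{i^*,t}$, we have
\begin{align*}
\mathrm{UCB}_{i^*,t} = \widehat\mu_{i^*,n} + 4\sqrt{\frac{\log T}{n}} \ge \mu^* - 2\sqrt{\frac{\log T}{n}} + 4\sqrt{\frac{\log T}{n}} = \mu^* + 2\sqrt{\frac{\log T}{n}} \ge \mu^*,
\end{align*}
where the first inequality is $G_2$ applied to arm $i^*$ with $s = n$ (which is legitimate precisely because $n \ge \tilde T/(2k)$). This is the whole argument; the point of using the inflated confidence radius $4\sqrt{\log T/n}$ rather than $2\sqrt{\log T/n}$ is exactly to absorb the deviation from $G_2$ and still leave a nonnegative slack.

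I would structure the write-up as: (i) invoke Lemma~\ref{lem:good_event_prob} only implicitly since $G$ is assumed; (ii) note that $G_1$ forces $n_{i^*,t} \ge n_{i^*,\tilde T} \ge \tilde T/(2k)$ for all $t > \tilde T$, and trivially $n_{i^*,t} \le t \le T$, so the sample count is in the range required by $G_2$; (iii) apply $G_2$ to get the lower bound on $\widehat\mu_{i^*,n}$; (iv) add the bonus and conclude. The only subtlety worth stating carefully is the indexing convention --- $\widehat\mu_{i,s}$ is defined as the empirical mean after $s$ pulls of arm $i$, whereas the UCB in the algorithm uses $\widehat\mu_{i,t-1}$ indexed by the global round; these agree with $s = n_{i,t-1}$, and one should make that identification explicit so that the hypothesis of $G_2$ is seen to be met.

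I do not anticipate a genuine obstacle here: the lemma is essentially a one-line consequence of the definitions of $G_1$ and $G_2$ together with the choice of confidence radius. The mild care needed is just to confirm that the sample-count range in $G_2$ (namely $[\tilde T/(2k), T]$) is exactly the range that $n_{i^*,t}$ occupies during Phase~II, which is why $G_1$ is needed as a companion to $G_2$ rather than $G_2$ alone.
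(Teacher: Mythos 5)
Your proposal is correct and matches the paper's own proof essentially verbatim: both apply $G_2$ to lower-bound $\widehat{\mu}_{i^*,n_{i^*,t}}$ by $\mu^* - 2\sqrt{\log T/n_{i^*,t}}$, justify its applicability via $G_1$ (which gives $n_{i^*,t} \geq \tilde{T}/(2k)$ for all $t > \tilde{T}$), and then absorb the deviation into the $4\sqrt{\log T/n_{i^*,t}}$ bonus. Your explicit remark about reconciling the per-sample indexing $\widehat{\mu}_{i,s}$ with the per-round indexing $\widehat{\mu}_{i,t}$ is the same clarification the paper makes at the end of its proof.
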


The good event and the above UCB correctness lemma allow us to prove the following lemma, which is a linchpin of our various analyses and is analogous to Lemma 5 of \cite{barman2023fairness}, Lemma 8.2 in \cite{lattimore2020bandit} (but significantly stronger) etc.
\begin{restatable}[Only good arms in phase two]{lemma}{restatelemphasetwogoodarms}\label{lem:phase_two_good_arms}
	Consider a bandit instance that satisfies Assumption \ref{assump:sufficiently_positive_rewards} and assume that the good event $G$ holds. Then, for any arm $i$ that is pulled at least once in Phase II, we have 	
	$$\mu_{i} \geq \mu^{*} - 6 \sqrt{\frac{\log T}{T_{i}-1}},$$ where $T_i$ is the total number of times that arm $i$ is pulled in the algorithm. 
\end{restatable}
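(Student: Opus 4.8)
The plan is to evaluate the UCB selection rule at the \emph{last} round in which arm $i$ is pulled, and to read the gap bound off directly. Let $i$ be an arm that is pulled at least once in Phase~II, and let $t > \widetilde{T}$ be the final round in which the algorithm pulls arm $i$. Since $i$ is not pulled in any round after $t$, the number of pulls of $i$ strictly before round $t$ equals exactly $n_{i,t-1} = T_i - 1$. Choosing the \emph{last} pull is the crucial move: it is what lets the confidence radius appearing in the final bound be evaluated at $T_i - 1$ rather than at some smaller intermediate count.

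Because $i$ is selected in round $t$, it maximizes the index at that round, so $\mathrm{UCB}_{i,t-1} \ge \mathrm{UCB}_{i^{*},t-1}$. The UCB correctness lemma (Lemma~\ref{lem:ucb_correctness}), applicable since round $t$ lies in Phase~II and the good event $G$ holds, yields $\mathrm{UCB}_{i^{*},t-1} \ge \mu^{*}$, and hence $\mathrm{UCB}_{i,t-1} \ge \mu^{*}$. Next I would unfold $\mathrm{UCB}_{i,t-1} = \widehat{\mu}_{i,t-1} + 4\sqrt{\frac{\log T}{n_{i,t-1}}}$ and use the concentration event $G_2$ to control $\widehat{\mu}_{i,t-1}$. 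Applying $G_2$ needs the sample count $s = n_{i,t-1}$ to satisfy $\frac{\widetilde{T}}{2k} \le s \le T$: the upper inequality is immediate, and the lower one is exactly event $G_1$ (every arm is already sampled at least $\widetilde{T}/(2k)$ times in Phase~I), which holds under $G$. Therefore $\widehat{\mu}_{i,t-1} \le \mu_i + 2\sqrt{\frac{\log T}{n_{i,t-1}}}$.

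Chaining these estimates gives $\mu^{*} \le \mathrm{UCB}_{i,t-1} \le \mu_i + 6\sqrt{\frac{\log T}{n_{i,t-1}}} = \mu_i + 6\sqrt{\frac{\log T}{T_i-1}}$, which rearranges to the claim; the right-hand side is well defined because $T_i - 1 \ge \widetilde{T}/(2k) \ge 1$ under Assumption~\ref{assump:sufficiently_large_exploration}. I do not anticipate a genuine obstacle here — the argument is essentially three inequalities once the good event and Lemma~\ref{lem:ucb_correctness} are in hand. The only point worth stating explicitly is \emph{why} this is stronger than the textbook UCB analysis: the mandatory uniform-exploration phase forces $n_{i,t-1}$ to be large enough throughout Phase~II that the \emph{tight} $2\sqrt{\log T/s}$ confidence radius of $G_2$ is in force (rather than the looser radius one would need if $s$ could be as small as $1$), and it is this that upgrades a bound on the number of pulls into the pointwise gap guarantee stated in the lemma.
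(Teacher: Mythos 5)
Your proposal is correct and follows essentially the same argument as the paper: evaluate the UCB index at the last round arm $i$ is pulled, use $\mathrm{UCB}_{i,t-1} \geq \mathrm{UCB}_{i^{*},t-1} \geq \mu^{*}$ via Lemma~\ref{lem:ucb_correctness}, and then apply $G_2$ (valid because $G_1$ guarantees $n_{i,t-1} \geq \widetilde{T}/(2k)$) to chain the inequalities into $\mu^{*} \leq \mu_i + 6\sqrt{\log T/(T_i-1)}$. Your explicit verification that the sample count lies in the range required by $G_2$ is a detail the paper leaves implicit, but the route is identical.
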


We defer the proof of the above supporting lemmas to the appendix. In this paper, we establish upper bounds for $p$-mean regret across the comprehensive range of $p \in (-\infty, 1]$ in the asymptotic ($T \to \infty$) regime (see Theorems \ref{thm:negative-regret}, \ref{thm:positive-regret}, \ref{thm:nash-regret}). 


\section{Regret analysis of $p$-mean regret for $p<0$}

In this section, we establish regret upper bounds for $p$-mean regret when $p$ is negative. We rewrite the $p$-mean regret definition, substituting $q=-p$. That is, we upper bound
$$\mathrm{R}^{q}_T \triangleq \mu^* - \left(\frac{T}{\sum_{t=1}^T \frac{1}{\left(\E_{I_t}[\mu_{I_t}]\right)^q}}\right)^\frac{1}{q}$$
for {$q >0$} and refer to this as $q$-negative-mean-regret. This convention (taking $q > 0$ and considering $q$-negative-mean-regret rather than $(-p)$-mean-regret) is for notational convenience.

The following theorem establishes an upper bound on the $p$-mean regret when $p<0$ employing the \textsc{Explore-Then-UCB} algorithm. Here, we restate the theorem in terms of~$q$. 
\begin{theorem}\label{thm:negative-regret}
Given a bandit instance with $k$ arms, time horizon $T$, and regret parameter $q>0$ where $q = |p|$. By setting the exploration period to {$\Tilde{T} = 16 \sqrt{\frac{T\log T}{k^q}}$}, the $q$-negative-mean regret of the Explore-Then-UCB algorithm satisfies
\begin{align*}
\mathrm{R}^{q}_T &\leq 
\begin{cases}
\Tilde{O}\left(\sqrt\frac{k^{3/2}}{T^{1/2}}\right) &\mbox{when $q \in (0, 1]$} \\[10pt]
\Tilde{O}\left(\sqrt{\frac{k\log(T)}{T^{1/2q}}}\right) &\mbox{when $q > 1$}
\end{cases}
\end{align*}
\end{theorem}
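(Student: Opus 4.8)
The plan is to upper bound $\frac1T\sum_{t=1}^T\big(\E[\mu_{I_t}]\big)^{-q}$ relative to $(\mu^*)^{-q}$ and then invert, since $\mathrm R^q_T=\mu^*-\big(\frac1T\sum_t(\E[\mu_{I_t}])^{-q}\big)^{-1/q}$. I would split the horizon at $\tilde T$. For the Phase I rounds ($t\le\tilde T$) the arm $I_t$ is uniform and reward-independent, so $\E[\mu_{I_t}]=\bar\mu:=\frac1k\sum_i\mu_i\ge\mu^*/k$ (one arm is $i^{*}$), and these rounds contribute exactly $\frac{\tilde T}{T}\bar\mu^{-q}\le\frac{\tilde T}{T}k^q(\mu^*)^{-q}$; the choice $\tilde T=16\sqrt{T\log T/k^q}$ is precisely what makes $\frac{\tilde T}{T}k^q=16\,k^{q/2}\sqrt{\log T/T}$ vanish. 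For the Phase II rounds ($t>\tilde T$) I would use convexity of $x\mapsto x^{-q}$ on $(0,\infty)$ and Jensen, $(\E[\mu_{I_t}])^{-q}\le\E[\mu_{I_t}^{-q}]$, so that $\sum_{t>\tilde T}(\E[\mu_{I_t}])^{-q}\le\E\big[\sum_i m_i\mu_i^{-q}\big]$, where $m_i\le T_i$ is the number of Phase II pulls of arm $i$.

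Now condition on the good event $G$ (Lemma~\ref{lem:good_event_prob}, $P(G)\ge1-2/T$); on $\bar G$ bound $\mu_i^{-q}\le\mu_{\min}^{-q}$ and use Assumption~\ref{assump:sufficiently_positive_rewards} to make the resulting $O(\mu_{\min}^{-q}/T)$ term negligible. On $G$, Lemma~\ref{lem:phase_two_good_arms} gives $\mu_i\ge\mu^{*}-6\sqrt{\log T/(T_i-1)}$ for every arm $i$ with $m_i\ge1$, and combining event $G_1$ ($T_i\ge\tilde T/2k$) with Remark~\ref{rem:mu_min_t_tilde_bound} ($\mu_i\ge128\sqrt{k\log T/\tilde T}$) shows the error $\delta_i:=6\sqrt{\log T/(T_i-1)}$ satisfies $\delta_i/\mu^{*}\le\tfrac18$, so $\mu_i^{-q}=(\mu^{*})^{-q}(1-\delta_i/\mu^{*})^{-q}$ with a well-behaved factor.

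It remains to control $(1-\delta_i/\mu^{*})^{-q}$ and sum. For $q\le1$ I would use $(1-x)^{-q}\le(1-x)^{-1}\le1+2x$ on $[0,\tfrac18]$, so $\mu_i^{-q}\le(\mu^{*})^{-q}+2\delta_i(\mu^{*})^{-q-1}$; then $\sum_i m_i\delta_i\le\sum_i T_i\cdot6\sqrt{\log T/(T_i-1)}=O(\sqrt{kT\log T})$ by Cauchy--Schwarz (the usual UCB exploration term). (Equivalently, since $-q\ge-1$, the generalized-mean inequality gives $\mathrm R^q_T\le\mathrm R^1_T$, so it is enough to treat $q=1$.) For $q>1$ no linearization constant independent of $q$ is available, so I would settle for the cruder fact that on $G$ \emph{every} Phase II round has $\E[\mu_{I_t}]\ge\mu^{*}-12\sqrt{k\log T/\tilde T}=:\mu^{*}-\gamma$ with $\gamma/\mu^{*}=o(1)$ by Assumption~\ref{assump:sufficiently_positive_rewards}, whence $\sum_{t>\tilde T}\mu_{I_t}^{-q}\mathbf 1_G\le(T-\tilde T)(\mu^{*}-\gamma)^{-q}$ and we pay only $\gamma$ beyond the Phase I and $\bar G$ terms. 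In either case one assembles $\frac1T\sum_t(\E[\mu_{I_t}])^{-q}\le(\mu^{*})^{-q}(1+B)$ with $B$ explicit and $o(1)$, and Bernoulli's inequality $(1+B)^{-1/q}\ge1-B/q$ gives $\mathrm R^q_T\le\mu^{*}B/q$; substituting the three pieces of $B$---using $\mu^{*}\ge\mu_{\min}$ wherever a factor $1/\mu^{*}$ arises---yields $\tilde O\big(\sqrt{k^{3/2}/T^{1/2}}\big)$ for $q\in(0,1]$ and $\tilde O\big(\sqrt{k\log T/T^{1/2q}}\big)$ for $q>1$.

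The step I expect to be the main obstacle is controlling the nonlinear amplification $(1-\delta_i/\mu^{*})^{-q}$ (and the analogous $(1-\gamma/\mu^{*})^{-q}$): since Assumption~\ref{assump:sufficiently_positive_rewards} only forces $\mu_{\min}=\tilde\Omega(T^{-1/8})$, which itself vanishes as $T\to\infty$, keeping these factors of the form $1+o(1)$ requires $\delta_i/\mu^{*}\ll1/q$, and balancing this against the requirement that the Phase I penalty $\tilde T k^q/T$ also be $o(1)$ is exactly what pins down the $q$-dependent exploration length $\tilde T=16\sqrt{T\log T/k^q}$ and forces the case split at $q=1$. A secondary point is that the $1/q$ factor in $\mathrm R^q_T\le\mu^{*}B/q$ diverges as $q\to0$, which is why the Nash limit is handled separately (Theorem~\ref{thm:nash-regret}); throughout this argument we use that $q$ is a fixed positive constant with $q\le\frac{\log T}{2\log k}$, so that Assumption~\ref{assump:sufficiently_large_exploration} holds for all large $T$.
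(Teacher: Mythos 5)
Your overall architecture is the same as the paper's: split the sum $\frac1T\sum_t(\E[\mu_{I_t}])^{-q}$ at $\Tilde{T}$, bound the Phase~I contribution by $\frac{\Tilde{T}}{T}k^q(\mu^*)^{-q}$ via uniform exploration, apply Jensen's inequality and condition on the good event $G$ for Phase~II, invoke Lemma~\ref{lem:phase_two_good_arms} together with $G_1$ to control each pulled arm's mean, absorb $\neg G$ via $\mu_{\min}$, and split cases at $q=1$. For $q\in(0,1]$ your per-arm Cauchy--Schwarz treatment of $\sum_i m_i\delta_i$ is slightly sharper than the paper's uniform bound $\beta=6\sqrt{2k\log T/\Tilde{T}}$, and it does deliver the claimed $\Tilde{O}(k^{3/4}T^{-1/4})$ (indeed roughly $\Tilde{O}(\sqrt{k/T}/q)$), so that case is fine.

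The gap is in the $q>1$ case. Your final inversion via Bernoulli's inequality, $(1+B)^{-1/q}\ge 1-B/q$, gives $\mathrm{R}^q_T\le\mu^*B/q$, but the pieces of $B$ retain exponential-in-$q$ factors: $k^{q/2}$ from Phase~I (since $\Tilde{T}k^q/T=16k^{q/2}\sqrt{\log T/T}$) and $k^{q/4}$ from $\gamma\asymp\sqrt{k\log T/\Tilde{T}}\asymp k^{1/2+q/4}(\log T)^{1/4}T^{-1/4}$. Dividing by $q$ does not remove these, so this route yields $\Tilde{O}\bigl(k^{q/2}/(q\sqrt{T})+k^{1/2+q/4}T^{-1/4}\bigr)$, which is not of the stated form $\Tilde{O}(\sqrt{k\log T}\,T^{-1/(4q)})$ and can exceed it (e.g.\ with $k=T^{0.4}$ and $q=1.25$, which is admissible under $q\le\frac{\log T}{2\log k}$, your Phase~II term is $T^{0.075}$, i.e., growing). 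The paper instead applies the $1/q$-th root to the \emph{entire} error term using $(a-b)^{1/q}\ge a^{1/q}-b^{1/q}$ for $q\ge1$; this subadditivity step simultaneously tames $k^{q/2}\mapsto k^{1/2}$ and converts $T^{-1/2},T^{-1/4}$ into $T^{-1/(2q)},T^{-1/(4q)}$, and it is exactly what produces the theorem's functional form. Your argument is missing this step. A secondary caveat: your parenthetical reduction of $q\le1$ to $q=1$ via the power-mean inequality is directionally correct ($M_{-q}\ge M_{-1}$ so the $(-q)$-regret is at most the $(-1)$-regret), but the algorithm's $\Tilde{T}$ depends on $q$, so the comparison is to the harmonic regret of a differently-tuned algorithm and would require rerunning the $q=1$ analysis with that $\Tilde{T}$.
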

\begin{proof}
Towards analyzing $\mathrm{R}^{q}_T$, we define
\begin{align*}
x \triangleq \frac{T}{\sum_{t=1}^{\Tilde{T}} \frac{1}{\E_{I_t}[\mu_{I_t}]^q}} &\mbox{ and } y \triangleq \frac{T}{\sum_{t=\Tilde{T}+1}^{T} \frac{1}{\E_{I_t}[\mu_{I_t}]^q}},
\end{align*}

so that we have
\begin{equation}\label{eqn:npreg_decomp}
	\mathrm{R}^{q}_T = \mu^* - \left(\frac{1}{\frac{1}{x} + \frac{1}{y}}\right)^{1/q}.
\end{equation}

Hence, to obtain an upper bound for $\mathrm{R}^{q}_T$, we need to upper bound $\frac{1}{x}$ and $\frac{1}{y}$. Let us start by focusing on $\frac{1}{x}$. By uniform exploration in Phase I, we have that 
\begin{align*}
\E_{I_t}[\mu_{I_t}] \geq \frac{\mu^*}{k} \Leftrightarrow \frac{1}{\left(\E_{I_t}[\mu_{I_t}]\right)^q} \leq \left(\frac{k}{\mu^*}\right)^q.
\end{align*}
Hence,
\begin{equation}\label{eqn:npreg_x_inv_bound}
	\frac{1}{x} \leq \frac{\Tilde{T}k^q}{(\mu^*)^q T}.
\end{equation}

We know that by Jensen's inequality ($f(z) = 1/z^q$ is convex on $\mathbb{R}_{>0}$) for $q > 0$ and linearity of expectation
\begin{equation*}
\frac{1}{y} \leq \frac{\sum_{t=\Tilde{T}+1}^T \E_{I_t}\left[\frac{1}{\left(\mu_{I_t}\right)^q}\right]}{T} = \frac{\E_{I_1,\ldots,I_t} \left[\sum_{t=\Tilde{T}+1}^T \frac{1}{\left(\mu_{I_t}\right)^q} \right]}{T}
\end{equation*}

For simplicity, we drop the subscripts in the expectation. By reindexing the arms so that $\{1,2,\ldots,\ell\}$ are the arms pulled at least once in Phase II, and letting $m_i$ be the number of times (the reindexed) arm $i$ is pulled in Phase II, we have 
\begin{equation*}
\frac{\E \left[\sum_{t=\Tilde{T}+1}^{T} \frac{1}{\left(\mu_{I_t}\right)^q} \right]}{T} = \frac{\E \left[\sum_{i=1}^{\ell} \frac{m_i}{\left(\mu_i\right)^q} \right]}{T}.
\end{equation*}

By conditioning on the good event $G$ (see Lemma \ref{lem:good_event_prob}) and noting that $\sum_{i \in [\ell]} m_i = T - \Tilde{T}$ and $\mu_i \geq \mu_{\min}$, we have
\begin{equation*}
\tfrac{\E \left[\sum_{i=1}^\ell \frac{m_i}{(\mu_i)^q} \right]}{T} \leq \tfrac{\E \left[\sum_{i} \frac{m_i}{(\mu_i)^q} \rvert\, G \right]  \mathbb{P}\{G\} + \tfrac{T-\Tilde{T}}{(\mu_{\min})^q}(1-\mathbb{P}\{G\})}{T}
\end{equation*}

We have $1- \mathbb{P}\{G\} \leq \frac{2}{T}$ from Lemma \ref{lem:good_event_prob}. Hence, $\frac{T-\Tilde{T}}{(\mu_{\min})^q}(1-\mathbb{P}\{G\}) \leq \frac{2}{(\mu_{\min})^q}$. Thus, 
\begin{equation}\label{eqn:npreg_yinv_bound}
\frac{1}{y} \leq \frac{\E \left[\sum_{i=1}^\ell \frac{m_i}{(\mu_i)^q} \rvert\, G \right] \cdot \mathbb{P}\{G\}}{T} + \frac{2}{(\mu_{\min})^q T}.
\end{equation}

Consider the first term in RHS. Conditioned on event $G$, by Lemma \ref{lem:phase_two_good_arms}, we have $\mu_i\geq \mu^*-\beta_i$ for all arms $i\in [\ell]$ pulled in Phase II --- where $\beta_i \triangleq 6\sqrt{\frac{\log T}{T_i-1}}$. Note that, conditioned on the good event $G$ (specifically $G_1$), we have $$\beta_i = 6\sqrt{\frac{\log T}{T_i-1}} \leq 6\sqrt{\frac{2k\log T}{\Tilde{T}}} \triangleq \beta$$
for any arm $i \in [\ell]$. Hence, we have 
\begin{align*}
	\frac{\E [\sum_{i=1}^\ell \frac{m_i}{(\mu_i)^q} \rvert\, G ] \mathbb{P}\{G\}}{T} &\leq \frac{\E [\sum_{i=1}^\ell \frac{m_i}{({\mu^*} - \beta)^q}]}{T},
\end{align*}
using $\mathbb{P}(G) \leq 1$ and $\mu_i \geq \mu^\ast - \beta$. Note that Assumption \ref{assump:sufficiently_positive_rewards} implies that $\mu_i - \beta > 0$ for all $i \in [k]$, and in particular that $\mu^* - \beta > 0$. Also, from the assumptions we can see that $\frac{\beta}{\mu^\ast} \leq \frac{1}{2}$. Note that we have $(\mu^*-\beta)^q \geq ({\mu^*})^q - q\beta({\mu^*})^{q-1}$ for $q>0$. Combining this with  (\ref{eqn:npreg_x_inv_bound}) and (\ref{eqn:npreg_yinv_bound}) we get
\begin{align*}
    \frac{1}{\frac{1}{x}+\frac{1}{y}} &\geq \frac{1}{\frac{\Tilde{T}k^q}{(\mu^*)^q T} + \frac{\E [\sum_{i=1}^\ell m_i]}{T\left((\mu^*)^q - q\beta(\mu^*)^{q-1}\right)} + \frac{c}{T}} \tag{taking $\frac{2}{(\mu_{\min})^q} \triangleq c$}\\
	&\geq \frac{(\mu^*-\beta) T}{k^p\Tilde{T} + (T-\Tilde{T}) + c} \triangleq (\ast_1)\tag{cancelling $\mu^\ast$ and using $c \geq c(\mu^*)^q$}
	\end{align*}
	Continuing, we can divide the numerator and denominator by $T$ and rearrange the terms to get
	\begin{align*}
	(\ast_1) &=  \frac{((\mu^*)^q-q\beta(\mu^*)^{q-1})}{(k^q\frac{\Tilde{T}}{T} + \frac{(T-\Tilde{T})}{T} + \frac{c}{T})}\\
	&=  \frac{((\mu^*)^q-q\beta(\mu^*)^{q-1})}{\left(1 + \frac{(k^q-1)\Tilde{T}+c}{T}\right)}\\
	&= \frac{((\mu^*)^q-q\beta(\mu^*)^{q-1})\left(1-\frac{(k^q-1)\Tilde{T}+c}{T}\right)}{\left(1 - \left(\frac{(k^q-1)\Tilde{T}+c}{T}\right)^2\right)} \triangleq (\ast_2)
	\end{align*}

 In the last step, we multiply both numerator and denominator by $\left(1 - \frac{(k^q-1)\Tilde{T}+c}{T}\right) \in (0, 1)$. Thus, $1 - \left(\frac{(k^q-1)\Tilde{T}+c}{T}\right)^2 \in (0, 1)$ and we get
 \begin{align*}
	(\ast_2) &\geq (\mu^*)^q\left(1-\frac{q\beta}{\mu^*}\right)\left(1-\frac{(k^q-1)\Tilde{T}+c}{T}\right) \triangleq (\ast_3)\\
\end{align*}
	


\subsubsection*{When $q \geq 1$}
We can expand $(\ast_3)$ to get
\begin{align*}
	\frac{1}{\tfrac{1}{x} + \tfrac{1}{y}} &\geq
   (\mu^*)^q\left(1-\tfrac{(k^q-1)\Tilde{T}+c}{T}-\tfrac{q\beta}{\mu^*}+\tfrac{q\beta((k^q-1)\Tilde{T}+c)}{\mu^*T}\right)\\
   &\geq (\mu^*)^q\left(1-\tfrac{(k^q-1)\Tilde{T}+c}{T}-\tfrac{q\beta}{\mu^*}\right)\\
   &= (\mu^*)^q\left(1-\tfrac{(k^q-1)\Tilde{T}+c}{T}-\tfrac{6q\sqrt{2k\log T}}{\sqrt{\Tilde{T}}\mu^*}\right)
\end{align*}

Thus, we have (for $q > 1$), that $\left(\tfrac{1}{\frac{1}{x} + \tfrac{1}{y}}\right)^{\tfrac{1}{q}}$ is
\begin{align*}
&\geq \left((\mu^*)^q -(\mu^*)^q\left(\tfrac{(k^q-1)\Tilde{T}+c}{T}+\tfrac{6q\sqrt{2k\log T}}{\sqrt{\Tilde{T}}\mu^*}\right)\right)^{\frac{1}{q}}\\
&\geq \mu^* - \mu^*\left(\tfrac{(k^q-1)\Tilde{T}+c}{T}+\tfrac{6q\sqrt{2k\log T}}{\sqrt{\Tilde{T}}\mu^*}\right)^{1/q}
\end{align*}
The last inequality uses the fact that $(a - b)^{1/q} \geq a^{1/q} - b^{1/q}$ for all $a \geq b \geq 0$ (from binomial expansion) as long as $q \geq 1$. Then, we have from (\ref{eqn:npreg_decomp}) that
\begin{align*}
R_T^q  &\leq \mu^*\left(\frac{(k^q-1)\Tilde{T}+c}{T}+\frac{6q\sqrt{2k\log T}}{\sqrt{\Tilde{T}}\mu^*}\right)^{1/q}.
\end{align*}

Since $\Tilde{T} = 16\sqrt{\frac{T\log T}{k^q}}$, we can substitute the value to get
\begin{align*}
	R_T^q  &\leq \mu^*\left(\tfrac{16(k^q-1)\sqrt{T\log T}}{\sqrt{k^q}T}+\tfrac{c}{T}+\tfrac{6q\sqrt{2k\log T} k^{q/4}}{4\left(T \log T\right)^{1/4}\mu^*}\right)^{\frac{1}{q}}\\
    &\leq \Tilde{O}\left(\frac{k^{\frac{1}{2}}}{T^{\frac{1}{4q}}}\right)\tag{since $\frac{1}{4}+\frac{1}{2q} \leq \frac{1}{2}$ for $q \geq 1$}.
\end{align*}

\subsubsection*{When $0 < q < 1$}

From $(\ast_3)$, we get
\begin{align*}
\left(\tfrac{1}{\frac{1}{x} + \frac{1}{y}}\right)^{\frac{1}{q}} \geq (\mu^*)\left(1-\tfrac{q\beta}{\mu^*}\right)^{\frac{1}{q}}\left(1-\tfrac{(k^q-1)\Tilde{T}+c}{T}\right)^{\frac{1}{q}} \triangleq (\ast_4)
\end{align*}
Since $0 < q \leq 1$, $\frac{q\beta}{\mu^*} \leq \frac{1}{2}$ and $\frac{(k^q-1)\Tilde{T}+c}{T} \leq \frac{1}{2}$, we can use Weierstrass inequality~\cite{kozma2021useful} to get
\begin{align*}
(\ast_4) &\geq  \mu^* \left(1-\frac{\beta}{\mu^*}-\frac{(k^q-1)\Tilde{T}+c}{qT}\right)\\
&\geq \mu^* - \mu^*\left(\frac{\beta}{\mu^*}+\frac{(k^q-1)\Tilde{T}}{qT}+\frac{c}{qT}\right)
\end{align*}

Since $\Tilde{T} = 16\sqrt{\frac{T\log T}{k^q}}$, we can substitute the value to get

\begin{align*}
R^{q}_T &\leq \mu^*\left(\frac{\beta}{\mu^*}+\frac{(k^q-1)\Tilde{T}+c}{qT}\right)\\
&\leq \frac{6\sqrt{2k\log T}}{\sqrt{\Tilde{T}}} + \frac{(k^q-1)\Tilde{T}}{qT} +\frac{c}{qT}\\
&\leq \frac{6\sqrt{2k\log T} k^{q/4}}{4\left(T\log T\right)^{1/4}}+ \frac{(k^q-1)16\sqrt{\log T}}{k^{\frac{q}{2}}q\sqrt{T}} +\frac{c}{qT}\\
&\leq \Tilde{O}\left(\frac{k^{\frac{3}{4}}}{T^{\frac{1}{4}}}\right) \tag{noting $\max\{\frac{1}{2}+\frac{q}{4},\frac{q}{2}\} \leq \frac{3}{4}$ and for $q \geq 4/(kT)^{\frac{3}{4}}$}
\end{align*}

\begin{remark}
Our bound for $p$-mean regret $R^{p}_T$ when $p < -1$ is $\lessapprox \frac{\sqrt{k\log T}}{T^{1/4|p|}}$. For any $\kappa > 0$, this bound will be at least $\kappa$ when $$|p| \!\geq\! \frac{\log T}{4 \left(\log(\sqrt{k \log T}) \!+\! \log\left(\frac{1}{\kappa}\right)\right)} \!\gtrapprox \!\frac{\log T}{\log \log T \!+\! \log k \!+\! \log\left(\frac{1}{\kappa}\right)}.$$ This provides the transition point when the algorithm stops achieving vanishing regret.
\end{remark}

\end{proof}

\section{Regret analysis for $p$-mean regret for $0 < p \leq 1$}

In this section, we establish regret upper bounds for $p$-mean regret when $p$ is between $0$ and $1$. That is, we upper bound
$$\mathrm{R}^{p}_T = \mu^* - \left(\frac{\sum_{t=1}^T \left(\E_{I_t}[\mu_{I_t}]\right)^p}{T}\right)^\frac{1}{p}$$
for {$p \in (0, 1]$}. The following theorem establishes an ${\Tilde{O}\left(\sqrt{\frac{k}{T}}\right)}$ upper bound on $p$-mean regret when using the \textsc{Explore-Then-UCB} algorithm. We sketch the proof here and give all the details in the appendix.

\begin{theorem}\label{thm:positive-regret}
Given a bandit instance with $K$ arms, time horizon $T$, and regret parameter $p \in (0, 1]$, choosing the exploration period {$\Tilde{T} = 16 \sqrt{\frac{T k^p \log T}{\log k}}$}, the $p$-mean regret of the \textsc{Explore-Then-UCB} algorithm satisfies
\begin{align*}
\mathrm{R}^{p}_T &\leq \Tilde{O}\left(\sqrt{\frac{k}{T}}\right).
\end{align*}
\end{theorem}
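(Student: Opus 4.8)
\textbf{Proof proposal for Theorem \ref{thm:positive-regret}.}

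The plan is to mirror the decomposition used in the negative-$p$ case, but now working directly with the concave power $z \mapsto z^p$ instead of the convex $z \mapsto z^{-q}$. First I would split the sum $\sum_{t=1}^T (\E_{I_t}[\mu_{I_t}])^p$ into the Phase~I contribution over $t \in \{1,\ldots,\Tilde{T}\}$ and the Phase~II contribution over $t \in \{\Tilde{T}+1,\ldots,T\}$. For Phase~I, uniform exploration gives $\E_{I_t}[\mu_{I_t}] = \frac{1}{k}\sum_i \mu_i \geq \frac{\mu^*}{k}$, so each term is at least $(\mu^*/k)^p$; hence the Phase~I sum contributes at least $\Tilde{T}(\mu^*/k)^p$, and more usefully the \emph{deficit} relative to the ideal value $\Tilde{T}(\mu^*)^p$ is at most $\Tilde{T}(\mu^*)^p(1 - k^{-p})$. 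For Phase~II, I would use Jensen's inequality in the concave direction: $\E_{I_t}[(\mu_{I_t})^p] \leq (\E_{I_t}[\mu_{I_t}])^p$ is the wrong way, so instead I would condition on the good event $G$, apply Lemma~\ref{lem:phase_two_good_arms} to get $\mu_i \geq \mu^* - \beta$ with $\beta = 6\sqrt{2k\log T/\Tilde{T}}$ for every arm pulled in Phase~II, and bound the bad-event contribution by $2(\mu^*)^p \cdot \frac{\Tilde{T}}{T}$-type terms using $1 - \mathbb{P}\{G\} \leq 2/T$ and $\mu_i \leq 1$.

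Next I would assemble these into a lower bound on $\frac{1}{T}\sum_t (\E_{I_t}[\mu_{I_t}])^p$ of the form $(\mu^* - \beta)^p \cdot \frac{T - \Tilde{T}}{T} + (\mu^*/k)^p \cdot \frac{\Tilde{T}}{T} - O(1/T)$, and then lower bound this by $(\mu^*)^p\big(1 - \tfrac{p\beta}{\mu^*} - \tfrac{\Tilde{T}}{T}(1 - k^{-p})\big)$ using the concavity estimate $(\mu^* - \beta)^p \geq (\mu^*)^p - p\beta(\mu^*)^{p-1}$ and $\beta/\mu^* \leq 1/2$ (which follows from Assumptions~\ref{assump:sufficiently_positive_rewards}, \ref{assump:sufficiently_large_exploration} and Remark~\ref{rem:mu_min_t_tilde_bound}). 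Raising to the power $1/p$ and using $(a - b)^{1/p} \geq a^{1/p} - b^{1/p}$ for $p \leq 1$ (binomial/Bernoulli inequality, valid since the base is in $(0,1)$ after normalizing by $(\mu^*)^p$), I get
\begin{align*}
\mathrm{R}^{p}_T \leq \mu^*\left(\frac{p\beta}{\mu^*} + \frac{\Tilde{T}}{T}\Big(1 - \frac{1}{k^p}\Big) + \frac{c}{T}\right)^{1/p} \leq \beta + \left(\frac{\Tilde{T} k^p}{T}\right)^{1/p} + \left(\frac{c}{T}\right)^{1/p},
\end{align*}
where I absorbed $p \leq 1$ and $1 - k^{-p} \leq 1$ and used subadditivity of $t \mapsto t^{1/p}$ is \emph{false} for $p<1$, so more carefully I would keep the sum inside and use $(u+v+w)^{1/p} \leq$ (a constant depending on nothing times) $u^{1/p} + v^{1/p} + w^{1/p}$ only after checking each summand is $O(1)$; alternatively just bound the whole bracket by its largest term times $3$ and take the $1/p$ power, noting the bracket is at most $1$ so raising to $1/p \geq 1$ only helps. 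Substituting $\Tilde{T} = 16\sqrt{T k^p \log T/\log k}$ makes $\beta = 6\sqrt{2k\log T/\Tilde{T}} = \Tilde{O}(\sqrt{k/T} \cdot k^{-p/4}) = \Tilde{O}(\sqrt{k/T})$, and the term $(\Tilde{T}k^p/T)^{1/p}$ becomes $\big(16 k^{3p/2}\sqrt{\log T/\log k}/\sqrt{T}\big)^{1/p} = \Tilde{O}(k^{3/2} T^{-1/(2p)})$, which is $\Tilde{O}(\sqrt{k/T})$ precisely when the exponents line up — here the choice of $\Tilde{T}$ is calibrated so that the Phase~I deficit term, after the $1/p$ root, matches $\sqrt{k/T}$; this is exactly why $k^p$ appears inside $\Tilde{T}$.

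The main obstacle I anticipate is controlling the term $\big(\tfrac{\Tilde{T}}{T}(1 - k^{-p})\big)^{1/p}$ uniformly over $p \in (0,1]$: the factor $1 - k^{-p}$ behaves like $p \log k$ for small $p$, so the bracket is roughly $\tfrac{\Tilde{T} p \log k}{T}$, and raising to $1/p$ gives something like $\big(\tfrac{16 p \sqrt{k^p \log k \log T}}{\sqrt{T}}\big)^{1/p}$; I need to verify this is $\Tilde{O}(\sqrt{k/T})$ for \emph{all} $p\in(0,1]$, which requires the inequality $p^{1/p} \cdot k^{1/2} (\log k)^{1/(2p)} (\log T)^{1/(2p)} T^{-1/(2p)} \lesssim \sqrt{k/T} \cdot \mathrm{polylog}$ — the worrying regime is $p \to 0$, where $T^{-1/(2p)} \to 0$ super-polynomially fast and easily kills the $(\log T)^{1/(2p)}$ blowup as long as $T$ is large relative to $\mathrm{polylog}(T)$, and the regime $p = 1$, where everything is benign. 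A careful case split (say $p \geq 1/\log T$ versus $p < 1/\log T$, or just bounding $k^{-p} \geq$ something and $p\log k \leq \log k$) should close it, and the $\log k$ in the denominator of $\Tilde{T}$ is precisely the device that cancels the $\log k$ coming from $1 - k^{-p}$. The remaining steps — the good-event bookkeeping, the concavity estimates, and the $(a-b)^{1/p}$ inequality — are routine and identical in spirit to the $q<1$ analysis already carried out for Theorem~\ref{thm:negative-regret}.
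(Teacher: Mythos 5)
There is a genuine gap, and it is precisely the step that separates the $\Tilde{O}(\sqrt{k/T})$ rate of Theorem~\ref{thm:positive-regret} from the $\Tilde{O}(k^{3/4}T^{-1/4})$ rate of Theorem~\ref{thm:negative-regret}. You replace every Phase~II arm's deficit $\beta_i = 6\sqrt{\log T/(T_i-1)}$ by the uniform bound $\beta = 6\sqrt{2k\log T/\Tilde{T}}$. Since $\Tilde{T} = 16\sqrt{Tk^p\log T/\log k}$ scales like $\sqrt{T}$, this gives $\beta = \Tilde{\Theta}\bigl(\sqrt{k}\,k^{-p/4}\,T^{-1/4}\bigr)$, \emph{not} $\Tilde{O}(\sqrt{k/T})$ as you assert (your computation treats $\Tilde{T}$ as if it were of order $T$). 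Any regret bound containing an additive $\beta$ term is therefore no better than $\Tilde{O}(\sqrt{k}\,T^{-1/4})$ --- which is exactly why the negative-$p$ theorem, whose proof you propose to mirror, only achieves $T^{-1/4}$. The paper's proof of the positive-$p$ case instead keeps the per-arm quantities: it bounds $m_i\beta_i \leq 6\sqrt{m_i\log T}$ using $T_i - 1 \geq m_i$, and then applies Cauchy--Schwarz, $\sum_{i=1}^{\ell}\sqrt{m_i}\leq\sqrt{\ell\sum_i m_i}\leq\sqrt{kT}$, so that the aggregate Phase~II deficit is $\frac{6p\sqrt{k\log T}}{\sqrt{T}\,\mu^*}$ --- the source of the $\sqrt{k/T}$ term. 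Arms with large $\beta_i$ are pulled few times and so carry little weight $m_i$; discarding this cancellation loses the theorem.

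A second problem is the inequality $(a-b)^{1/p}\geq a^{1/p}-b^{1/p}$: for exponent $1/p>1$ it is reversed (the map $t\mapsto t^{1/p}$ is superadditive, so $(a-b)^{1/p}\leq a^{1/p}-b^{1/p}$), and consequently your displayed bound $\mathrm{R}^p_T\leq\mu^*E^{1/p}$, with $E$ the bracket, does not follow. Indeed $1-(1-E)^{1/p}\approx E/p \gg E^{1/p}$ for small $E$, so you are claiming a bound strictly stronger than the decomposition can deliver; your fallback (``the bracket is at most $1$ so raising to $1/p\geq1$ only helps'') still bounds the wrong quantity $E^{1/p}$ rather than $1-(1-E)^{1/p}$. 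The correct tool, used by the paper, is the Bernoulli/Weierstrass inequality $(1-E)^{1/p}\geq 1-E/p$, which yields $\mathrm{R}^p_T\leq\mu^*E/p$. With that in hand, your ``main obstacle'' dissolves: the Phase~I term becomes $\frac{\Tilde{T}(1-k^{-p})}{pT}$, and since $\frac{1-k^{-p}}{p}\leq\log k$, the $\log k$ in the denominator of $\Tilde{T}$ cancels it exactly as you intuit --- no case split over $p$ is needed, and no term of the form $T^{-1/(2p)}$ ever appears.
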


\paragraph{Proof Sketch.} Towards analyzing $\mathrm{R}^{p}_T$, we define 
\begin{align*}
x \triangleq \frac{\sum_{t=1}^{\Tilde{T}}\E_{I_t}[\mu_{I_t}]^p}{T} \text{ and } y \triangleq \frac{\sum_{t=\Tilde{T}+1}^{T}\E_{I_t}[\mu_{I_t}]^p}{T},
\end{align*}
so that we have
\begin{equation}\label{eqn:preg_decomp}
	\mathrm{R}^{p}_T = \mu^* - (x+y)^{1/p}.
\end{equation}

To obtain an upper bound for $\mathrm{R}^{p}_T$, we need to lower bound $x$ and $y$. We show that $x \geq \frac{{\mu^*}^p \Tilde{T}}{k^p T}$ since we use uniform exploration in Phase I. By reindexing the arms so that $\{1,2,\ldots,\ell\}$ are the arms pulled at least once in Phase II, and letting $m_i$ be the number of times (the reindexed) arm $i$ is pulled in Phase II, we have $y \geq \frac{\E [\sum_{i=1}^{\ell} m_i \mu_{i}^p \rvert\, G ] \mathbb{P}\{G\}}{T}$. Using these facts and simplifying further, as in the case of $p<0$ we get 
\begin{align*}
(x+y)^{\frac{1}{p}} \geq {\mu^*} - \frac{(k^p-1)\Tilde{T'}}{p\sqrt{k}T} -\frac{6\sqrt{k\log T}}{\sqrt{T}} - \frac{4}{pT}
\end{align*}
where $\Tilde{T'}= \frac{\Tilde{T}}{k^{p-0.5}}$.
Then, substituting for $\Tilde{T}$,
\begin{align*}
\textrm{R}^p_T &= \mu^* - (x+y)^{1/p}\\
&\leq \frac{(k^p-1)\Tilde{T'}}{p\sqrt{k}T} + \frac{6\sqrt{k\log T}}{\sqrt{T}} + \frac{4}{pT}\\
&= \frac{(k^p-1)\Tilde{T}}{pk^{p}T} + \frac{6\sqrt{k\log T}}{\sqrt{T}} + \frac{4}{pT}\\
&= \frac{16(k^p-1)\sqrt{T k^p \log(T)}}{pk^{p}T\sqrt{\log k}} + \frac{6\sqrt{k\log T}}{\sqrt{T}} + \frac{4}{pT}\\
&\leq \Tilde{O}\left(\sqrt{\frac{k}{T}}\right)\tag{for $p\geq \frac{4}{\sqrt{kT}}$}
\end{align*}



We also provide a bound for Nash regret (proof in appendix), as studied in \cite{barman2023fairness}, achieving essentially the same regret bound as theirs but under our assumptions and using the \textsc{Explore-Then-UCB} algorithm instead of their NCB algorithm, which employs a different confidence bound following the exploration phase.

\begin{theorem}\label{thm:nash-regret}
Given a bandit instance with $k$ arms and for time horizon $T$, choosing the exploration period {$\Tilde{T} = 16 \sqrt{\frac{T k \log T}{\log k}}$}, the Nash regret of the \textsc{Explore-Then-UCB} algorithm satisfies
\begin{align*}
\mathrm{NR}_T &\leq \Tilde{O}\left({\sqrt\frac{k}{T}}\right).
\end{align*}
\end{theorem}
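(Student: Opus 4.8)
The plan is to proceed exactly in parallel with the proof of Theorem~\ref{thm:positive-regret}, specializing the argument to the geometric mean, which is the $p \to 0$ limit of the $p$-mean. Concretely, I would work directly with $\log$ of the geometric mean and define the two phase contributions
\begin{align*}
x \triangleq \frac{1}{T}\sum_{t=1}^{\Tilde{T}} \log \E_{I_t}[\mu_{I_t}] \quad\text{and}\quad y \triangleq \frac{1}{T}\sum_{t=\Tilde{T}+1}^{T} \log \E_{I_t}[\mu_{I_t}],
\end{align*}
so that the geometric mean of the expected rewards equals $\exp(x+y)$ and $\mathrm{NR}_T = \mu^* - \exp(x+y)$. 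The task then reduces to lower bounding $x$ and $y$, which is structurally the same bookkeeping as before.

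For the Phase~I term $x$: uniform sampling gives $\E_{I_t}[\mu_{I_t}] \ge \mu^*/k$, so each summand is at least $\log(\mu^*/k) = \log\mu^* - \log k$, whence $x \ge \frac{\Tilde{T}}{T}(\log\mu^* - \log k)$. For the Phase~II term $y$: condition on the good event $G$, reindex so arms $\{1,\dots,\ell\}$ are those pulled in Phase~II with multiplicities $m_i$ summing to $T-\Tilde{T}$, and invoke Lemma~\ref{lem:phase_two_good_arms} to get $\mu_i \ge \mu^* - \beta$ with $\beta = 6\sqrt{2k\log T/\Tilde{T}}$ (using $G_1$ to make $\beta$ uniform). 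Then $\log\mu_i \ge \log(\mu^*-\beta) = \log\mu^* + \log(1 - \beta/\mu^*) \ge \log\mu^* - \frac{2\beta}{\mu^*}$ (valid since Assumption~\ref{assump:sufficiently_positive_rewards} forces $\beta/\mu^* \le 1/2$). The contribution of the $1-\mathbb{P}\{G\} \le 2/T$ failure probability must be handled; since rewards lie in $[0,1]$ the $\log$ can be very negative, so here I would use $\mu_i \ge \mu_{\min}$ so that each bad-event summand is at least $\log\mu_{\min}$, contributing at most $\frac{T-\Tilde{T}}{T}\cdot \frac{2}{T}\cdot |\log\mu_{\min}|$, which is $\Tilde{O}(1/T)$ after plugging in $\mu_{\min} = \Tilde{\Omega}(1/\mathrm{poly}(T))$ from Assumption~\ref{assump:sufficiently_positive_rewards}. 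Combining, $x + y \ge \log\mu^* - \frac{\Tilde{T}\log k}{T} - \frac{2\beta}{\mu^*} - \Tilde{O}(1/T)$.

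Finally I would exponentiate: $\exp(x+y) \ge \mu^* \exp\!\big(-\frac{\Tilde{T}\log k}{T} - \frac{2\beta}{\mu^*} - \Tilde{O}(1/T)\big) \ge \mu^*\big(1 - \frac{\Tilde{T}\log k}{T} - \frac{2\beta}{\mu^*} - \Tilde{O}(1/T)\big)$ using $e^{-z} \ge 1-z$, so that
\begin{align*}
\mathrm{NR}_T \le \mu^*\cdot\frac{\Tilde{T}\log k}{T} + 2\beta + \Tilde{O}(1/T) \le \frac{\Tilde{T}\log k}{T} + 12\sqrt{\frac{2k\log T}{\Tilde{T}}} + \Tilde{O}(1/T).
\end{align*}
Substituting $\Tilde{T} = 16\sqrt{T k \log T / \log k}$ makes the first term $\Tilde{O}(\sqrt{k\log T \log k /T}\,)$ and the second $\Tilde{O}((k\log T)^{1/2}(\log k)^{1/4} T^{-1/4}\cdot T^{-1/4})$... more carefully, $\sqrt{k\log T/\Tilde{T}} = \sqrt{k\log T}\cdot (16)^{-1/2}(\log k)^{1/4}(Tk\log T)^{-1/4} = \Tilde{O}(\sqrt{k/T}\,)$, so both terms are $\Tilde{O}(\sqrt{k/T}\,)$, giving the claim.

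The main obstacle I anticipate is the bad-event term: unlike the $p$-mean case where the reward is raised to a power and stays bounded, here $\log$ is unbounded below near $0$, so the failure-probability contribution is genuinely $\Theta(|\log\mu_{\min}|/T)$ rather than $\Theta(1/T)$, and one must check that Assumption~\ref{assump:sufficiently_positive_rewards} (via Remark~\ref{rem:mu_min_t_tilde_bound}) keeps $|\log\mu_{\min}| = O(\log(Tk))$ so that this term remains $\Tilde{O}(1/T)$ and is dominated. A secondary technical point is justifying $\log(1-z) \ge -2z$ and $e^{-z}\ge 1-z$ on the relevant ranges, which follows from the same $\beta/\mu^* \le 1/2$ bound already used in the $p<0$ analysis; these are routine but must be stated with the right constants to match the claimed $\Tilde{O}(\sqrt{k/T}\,)$ rate.
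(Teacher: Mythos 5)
There is a genuine gap in your Phase~II bound, and it is quantitative rather than cosmetic: replacing the per-arm error $\beta_i = 6\sqrt{\log T/(T_i-1)}$ by the uniform $\beta = 6\sqrt{2k\log T/\tilde{T}}$ (which only uses the exploration-phase guarantee $T_i - 1 \geq \tilde{T}/2k$ from $G_1$) cannot give the claimed rate. With $\tilde{T} = 16\sqrt{Tk\log T/\log k}$ one has $\sqrt{k\log T/\tilde{T}} = \Theta\bigl((k\log T\log k)^{1/4}\,T^{-1/4}\bigr)$, not $\tilde{O}(\sqrt{k/T})$ — your final line asserting $\sqrt{k\log T}\cdot(Tk\log T)^{-1/4} = \tilde{O}(\sqrt{k/T})$ is an arithmetic slip, since $(k\log T)^{1/4}T^{-1/4}$ exceeds $\sqrt{k/T}$ by a factor of order $(T/k)^{1/4}$. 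No choice of $\tilde{T}$ rescues the uniform-$\beta$ route: balancing $\tilde{T}\log k/T$ against $\sqrt{k\log T/\tilde{T}}$ gives at best $\tilde{O}((k/T)^{1/3})$. The paper's proof avoids this precisely by \emph{not} uniformizing: it keeps the arm-dependent errors weighted by their pull counts, uses $T_i - 1 \geq m_i$ to write $\frac{m_i}{T}\sqrt{\frac{\log T}{T_i-1}} \leq \frac{\sqrt{m_i\log T}}{T}$, and then applies Cauchy--Schwarz over the at most $k$ arms pulled in Phase~II to get $\sum_i \sqrt{m_i} \leq \sqrt{kT}$, yielding the crucial $\sqrt{k\log T/T}$ term. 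This weighting step — arms that contribute many rounds to the product have proportionally smaller error — is the missing idea; you have effectively imported the coarser analysis the paper uses only for $p<0$ (where it indeed yields the weaker $T^{-1/4}$-type rates).

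Two secondary remarks. First, your log-of-geometric-mean framework is itself fine and arguably cleaner than the paper's multiplicative decomposition (the paper passes the $1/T$-th power inside via Jensen and then uses Claim~2 of \citet{barman2023fairness} and a telescoping product inequality); if you reinstate the $m_i$-weighted per-arm errors and Cauchy--Schwarz inside your bound on $y$, the exponentiation step $e^{-z}\geq 1-z$ does close the argument. Second, your worry about $|\log\mu_{\min}|$ on the bad event is a red herring for the quantity you defined: the Nash regret involves $\log \E[\mu_{I_t}]$, not $\E[\log\mu_{I_t}]$, so the bad-event contribution to the expectation is non-negative and can simply be dropped at the cost of a factor $\Pr\{G\}\geq 1-2/T$, i.e.\ an additive $O(1/T)$ after taking logs; no lower bound on $\mu_{\min}$ is needed for that step.
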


\section{Conclusion}

Building on the $p$-mean welfare concept from social choice theory, this work examined $p$-mean regret—a flexible metric that allows the decision maker to effectively balance fairness and efficiency considerations. We proposed a unified \textsc{Explore-Then-UCB} algorithm that achieves $p$-mean regret bounds across a wide range of $p$ values. Specifically, our analysis demonstrates that for \( p \leq -1 \), the \p-mean regret of our algorithm scales as \( \tilde{O}(k^{\frac{1}{2}} T^{-\frac{1}{4|p|}}) \), while for \( 0 \leq p \leq 1 \), the regret scales as \( \tilde{O}(k^{\frac{1}{2}} T^{-\frac{1}{2}}) \). This unified approach simplifies the design and analysis of bandit algorithms, particularly when compared to prior work that required specialized techniques for the Nash regret case \( (p \to 0) \).

Several promising directions for future work emerge from our findings. Extending the analysis of $p$-mean regret to other bandit settings, such as linear and contextual bandits, could provide deeper insights into the interplay between fairness, efficiency, and bandit structure. Developing tight meta-algorithms that replace the UCB1 subroutine with other average-regret minimizing algorithms could be another interesting direction for future work. Extending the work for anytime guarantees and unknown time horizons would also improve the practical applicability of our work. By continuing to explore the connections between fairness, efficiency, and bandit learning, the study of \p-mean regret contributes to the development of more socially responsible and widely applicable decision-making algorithms.

\section{Acknowledgements} 
This research is supported by the National Research Foundation Singapore and DSO National Laboratories under the AI Singapore Programme (AISG Award No: AISG2-RP-2020-018) and a Ministry of Education AcRF Tier 1 grant (Grant No. A-8000189-01-00). PGJ's research is supported by the National Research Foundation, Prime Minister's
Office, Singapore under its Campus for Research Excellence and Technological Enterprise (CREATE) programme.

\bibliography{aaai25}

\begin{thebibliography}{26}
\providecommand{\natexlab}[1]{#1}

\bibitem[{Auer, Cesa-Bianchi, and Fischer(2002)}]{auer2002finite}
Auer, P.; Cesa-Bianchi, N.; and Fischer, P. 2002.
\newblock Finite-time analysis of the multiarmed bandit problem.
\newblock \emph{Machine learning}, 47: 235--256.

\bibitem[{Auer et~al.(1995)Auer, Cesa-Bianchi, Freund, and
  Schapire}]{auer1995gambling}
Auer, P.; Cesa-Bianchi, N.; Freund, Y.; and Schapire, R.~E. 1995.
\newblock Gambling in a rigged casino: The adversarial multi-armed bandit
  problem.
\newblock In \emph{Proceedings of IEEE 36th annual foundations of computer
  science}, 322--331. IEEE.

\bibitem[{Barman et~al.(2020)Barman, Bhaskar, Krishna, and
  Sundaram}]{barman2020tight}
Barman, S.; Bhaskar, U.; Krishna, A.; and Sundaram, R.~G. 2020.
\newblock Tight approximation algorithms for p-mean welfare under subadditive
  valuations.
\newblock \emph{arXiv preprint arXiv:2005.07370}.

\bibitem[{Barman, Khan, and Maiti(2022)}]{barman2022universal}
Barman, S.; Khan, A.; and Maiti, A. 2022.
\newblock Universal and tight online algorithms for generalized-mean welfare.
\newblock In \emph{Proceedings of the AAAI Conference on Artificial
  Intelligence}, volume~36, 4793--4800.

\bibitem[{Barman et~al.(2023)Barman, Khan, Maiti, and
  Sawarni}]{barman2023fairness}
Barman, S.; Khan, A.; Maiti, A.; and Sawarni, A. 2023.
\newblock Fairness and welfare quantification for regret in multi-armed
  bandits.
\newblock In \emph{Proceedings of the AAAI Conference on Artificial
  Intelligence}, volume~37, 6762--6769.

\bibitem[{Bubeck, Cesa-Bianchi et~al.(2012)}]{bubeck2012regret}
Bubeck, S.; Cesa-Bianchi, N.; et~al. 2012.
\newblock Regret analysis of stochastic and nonstochastic multi-armed bandit
  problems.
\newblock \emph{Foundations and Trends{\textregistered} in Machine Learning},
  5(1): 1--122.

\bibitem[{Bubeck, Munos, and Stoltz(2009)}]{bubeck2009pure}
Bubeck, S.; Munos, R.; and Stoltz, G. 2009.
\newblock Pure exploration in multi-armed bandits problems.
\newblock In \emph{Algorithmic Learning Theory: 20th International Conference,
  ALT 2009, Porto, Portugal, October 3-5, 2009. Proceedings 20}, 23--37.
  Springer.

\bibitem[{Celis et~al.(2019)Celis, Kapoor, Salehi, and
  Vishnoi}]{celis2019controlling}
Celis, L.~E.; Kapoor, S.; Salehi, F.; and Vishnoi, N. 2019.
\newblock Controlling polarization in personalization: An algorithmic
  framework.
\newblock In \emph{Proceedings of the conference on fairness, accountability,
  and transparency}, 160--169.

\bibitem[{Clement et~al.(2013)Clement, Roy, Oudeyer, and
  Lopes}]{clement2013multi}
Clement, B.; Roy, D.; Oudeyer, P.-Y.; and Lopes, M. 2013.
\newblock Multi-armed bandits for intelligent tutoring systems.
\newblock \emph{arXiv preprint arXiv:1310.3174}.

\bibitem[{Eckart, Psomas, and Verma(2024)}]{eckart2024fairness}
Eckart, O.; Psomas, A.; and Verma, P. 2024.
\newblock On the Fairness of Normalized p-Means for Allocating Goods and
  Chores.
\newblock \emph{arXiv preprint arXiv:2402.14996}.

\bibitem[{Garg et~al.(2021)Garg, Husi{\'c}, Murhekar, and
  V{\'e}gh}]{garg2021tractable}
Garg, J.; Husi{\'c}, E.; Murhekar, A.; and V{\'e}gh, L. 2021.
\newblock Tractable fragments of the maximum nash welfare problem.
\newblock \emph{arXiv preprint arXiv:2112.10199}.

\bibitem[{Hossain, Micha, and Shah(2021)}]{hossain2021fair}
Hossain, S.; Micha, E.; and Shah, N. 2021.
\newblock Fair algorithms for multi-agent multi-armed bandits.
\newblock \emph{Advances in Neural Information Processing Systems}, 34:
  24005--24017.

\bibitem[{Jones, Nguyen, and Nguyen(2023)}]{jones2023efficient}
Jones, M.; Nguyen, H.; and Nguyen, T. 2023.
\newblock An efficient algorithm for fair multi-agent multi-armed bandit with
  low regret.
\newblock In \emph{Proceedings of the AAAI Conference on Artificial
  Intelligence}, volume~37, 8159--8167.

\bibitem[{Joseph et~al.(2016)Joseph, Kearns, Morgenstern, and
  Roth}]{joseph2016fairness}
Joseph, M.; Kearns, M.; Morgenstern, J.~H.; and Roth, A. 2016.
\newblock Fairness in learning: Classic and contextual bandits.
\newblock \emph{Advances in neural information processing systems}, 29.

\bibitem[{Kaneko and Nakamura(1979)}]{kaneko1979nash}
Kaneko, M.; and Nakamura, K. 1979.
\newblock The Nash social welfare function.
\newblock \emph{Econometrica: Journal of the Econometric Society}, 423--435.

\bibitem[{Kozma(2021)}]{kozma2021useful}
Kozma, L. 2021.
\newblock Useful Inequalities.

\bibitem[{Lai and Robbins(1985)}]{lai1985asymptotically}
Lai, T.~L.; and Robbins, H. 1985.
\newblock Asymptotically efficient adaptive allocation rules.
\newblock \emph{Advances in applied mathematics}, 6(1): 4--22.

\bibitem[{Lattimore and Szepesv{\'a}ri(2020)}]{lattimore2020bandit}
Lattimore, T.; and Szepesv{\'a}ri, C. 2020.
\newblock \emph{Bandit algorithms}.
\newblock Cambridge University Press.

\bibitem[{Mandal and Gan(2022)}]{mandal2022socially}
Mandal, D.; and Gan, J. 2022.
\newblock Socially fair reinforcement learning.
\newblock \emph{arXiv preprint arXiv:2208.12584}.

\bibitem[{Moulin(2004)}]{moulin2004fair}
Moulin, H. 2004.
\newblock \emph{Fair division and collective welfare}.
\newblock MIT press.

\bibitem[{Patil et~al.(2021)Patil, Ghalme, Nair, and
  Narahari}]{patil2021achieving}
Patil, V.; Ghalme, G.; Nair, V.; and Narahari, Y. 2021.
\newblock Achieving fairness in the stochastic multi-armed bandit problem.
\newblock \emph{Journal of Machine Learning Research}, 22(174): 1--31.

\bibitem[{Sawarni, Pal, and Barman(2024)}]{sawarni2024nash}
Sawarni, A.; Pal, S.; and Barman, S. 2024.
\newblock Nash regret guarantees for linear bandits.
\newblock \emph{Advances in Neural Information Processing Systems}, 36.

\bibitem[{Schwartz, Bradlow, and Fader(2017)}]{schwartz2017customer}
Schwartz, E.~M.; Bradlow, E.~T.; and Fader, P.~S. 2017.
\newblock Customer acquisition via display advertising using multi-armed bandit
  experiments.
\newblock \emph{Marketing Science}, 36(4): 500--522.

\bibitem[{Thompson(1933)}]{thompson1933likelihood}
Thompson, W.~R. 1933.
\newblock On the likelihood that one unknown probability exceeds another in
  view of the evidence of two samples.
\newblock \emph{Biometrika}, 25(3-4): 285--294.

\bibitem[{Villar, Bowden, and Wason(2015)}]{villar2015multi}
Villar, S.~S.; Bowden, J.; and Wason, J. 2015.
\newblock Multi-armed bandit models for the optimal design of clinical trials:
  benefits and challenges.
\newblock \emph{Statistical science: a review journal of the Institute of
  Mathematical Statistics}, 30(2): 199.

\bibitem[{Zhang, Vuong, and Luo(2024)}]{zhang2024no}
Zhang, M.; Vuong, R. D.-C.; and Luo, H. 2024.
\newblock No-Regret Learning for Fair Multi-Agent Social Welfare Optimization.
\newblock \emph{arXiv preprint arXiv:2405.20678}.

\end{thebibliography}
\onecolumn
\appendix
\section{Proofs of the Supporting Lemmas}\label{app:supporting}

We restate the assumptions for the analysis here. 

\restateassumpsuffposrewa*

In the context of our asymptotic analysis (as $T \to \infty$),  this assumption essentially reduces to a positivity constraint on rewards, i.e., $\mu_i > 0$ for all $i\in [k]$ since we can consider $T$ to be sufficiently large. We also rely on the standard assumption that the exploration period $\Tilde{T}$, and consequently the time horizon \( T \), is sufficiently large compared to \( k \).

\restateassumpsufflargexplor*

\restateremminttildebound*

We define `good events' $G_1, G_2$ and set $G \triangleq G_1 \cap G_2$. Let $\hat{\mu}_{i,s}$ denote the empirical mean of arm $i$’s rewards \emph{after} seeing $s$ samples from arm $i$, and let $n_{i,t}$ denote the number of times arm $i$ is pulled in steps $\{1,\ldots,t\}$. 

\begin{itemize}
	\item {$G_1$: Every arm $i \in [k]$ is sampled at least $\frac{\tilde{T}}{2k}$ times in Phase I, i.e. $n_{i,\tilde{T}} \geq \frac{\tilde{T}}{2k}$.}
	\item {$G_2$: For all arms $i \in [k]$ (under Assumption \ref{assump:sufficiently_positive_rewards}), and for all sample counts $s$ such that $\frac{\tilde{T}}{2k} \leq s \leq T$, we have $|\mu_i-\widehat{\mu}_{i,s}| \leq  2 \sqrt{\frac{\log T}{s}}$.}
\end{itemize}

Here, we represent all the events in the canonical bandit model~\cite{lattimore2020bandit}. In the following lemma, we show that event $G$ holds with high probability. 

We restate and prove Lemma \ref{lem:ucb_correctness} here, which establishes that the good event $G = G_1 \cap G_2$ actually holds with high probability.

\restatelemucbcorrectness*
\begin{proof}
	\textbf{(Proving $\Pr(\neg G_1) \leq 1/T$)} Let $Z_{i,r} \triangleq \mathbb{I}[\mbox{arm $i$ is selected at round $r$}]$, for all $i \in [k]$ and $r \in [\widetilde{T}]$. Then the number of times arm $i$ is sampled in Phase I is given by $Z_i \triangleq \sum_{r \in [\tilde{T}]} Z_{i.r}$. This is a sum of i.i.d Bernoulli random variables with $\E[Z_i] = \frac{\tilde{T}}{k}$. Then, by the multiplicative Chernoff bound, $\Pr(Z_i \leq (1 - \frac12)\E[Z_i]) \leq \exp(-\frac{(\frac12)^2 \tilde{T}}{2k}) \leq \exp(-\frac{\tilde{T}}{8k}) \leq \frac{1}{Tk}$ since $\tilde{T} \geq 8k\log(Tk)$ by Assumption \ref{assump:sufficiently_large_exploration}. Applying the union bound over $k$ events $\{Z_i \leq \frac12 \E[Z_i]\}_{i \in [k]}$, we get $\Pr(\neg G_1) \leq 1/T$.
	
	\textbf{(Proving $\Pr(\neg G_2) \leq 1/T$)} For each arm $i \in [K]$ with mean reward $\mu_i \in [0, 1]$, define $\rho_{i,j} \in [0, 1/s]$ to be the reward obtained on pulling arm $i$ the $j$-th time divided by $s$, so that $\widehat{\mu}_{i,s} = \sum_{j=1}^{s} \rho_{i,j}$. Since this is a sum of iid bounded random variables, we can apply the additive Hoeffding bound to get
	
	\begin{align*}
		\mathbb{P}\left\{\left|\mu_{i}-\widehat{\mu}_{i, s}\right| \geq C_2\sqrt{\frac{\log T}{s}}\right\} \leq 2 \exp\left(-\frac{2C_2^2\left(\frac{\log T}{s}\right)}{s\left(\frac{1}{s}-0\right)^2}\right)  \leq \frac{2}{T^{2C_2^2}}
	\end{align*}
	Set $C_2 = 2$. Using union bound we get $\mathbb{P}\left\{G_{2}^{c}\right\} \leq \frac{2}{T^{4}} k T \leq \frac{1}{T}$ (since $T \geq 2k$ by Assumption \ref{assump:sufficiently_large_exploration}).
	
	
	Thus, by De Morgan's law and the union bound, $\Pr(\neg G) \leq \Pr(\neg G_1) + \Pr(\neg G_2) = 2/T$, which gives us the desired probability for $G$.
\end{proof}

We now restate and prove Lemma \ref{lem:phase_two_good_arms}, that establishes the correctness of the UCB bound during Phase II under the good event $G$.

\restatelemphasetwogoodarms*
\begin{proof}
	\begin{align*} 
		\mathrm{UCB}_{i^{*}, t} & =\widehat{\mu}_t^{*} + 4 \sqrt{\frac{\log T}{n_{i^{*},t}}} \\ 
		& \geq \mu^\ast - 2 \sqrt{\frac{\log T}{n_{i^*,t}}} + 4 \sqrt{\frac{\log T}{n_{i^{*},t}}}\,\,\,(\mbox{by $G_2$})\\
		&\geq \mu^\ast
	\end{align*}
	Note that $\widehat{\mu}^\ast_{t} = \widehat{\mu}_{i^\ast,t}$ is the same as the empirical mean after seeing $n_{i^\ast,t}$ samples from arm $i^\ast$. We can apply $G_2$ since, by $G_1$, $n_{i^\ast,t} \geq \Tilde{T}/2k$ for all $t > \Tilde{T}$.
\end{proof}


The following lemma provides a lower bound on the expected rewards of arms selected during Phase II, ensuring that they are close to the optimal reward under the good event $G$.

\paragraph*{Lemma \ref{lem:phase_two_good_arms} (restated):} Consider a bandit instance which satisfies Assumption \ref{assump:sufficiently_positive_rewards} and assume that the good event $G$ holds. Then, for any arm $i$ that is pulled at least once in Phase II, we have 
	$$\mu_{i} \geq \mu^{*} - 6 \sqrt{\frac{\log T}{T_{i}-1}},$$ where $T_i$ is the total number of times that arm $i$ is pulled in the algorithm. 
\begin{proof}
	For time step $t$ in Phase II where arm $i$ is pulled for the last time (i.e. a total of $T_i$ times), $\mathrm{UCB}_{i,t-1} \geq \mathrm{UCB}_{i^*,t-1} \geq \mu^*$ (the last inequality from Lemma \ref{lem:ucb_correctness}).
	
	That is, before this particular round $t$, we have $\widehat{\mu}_{i,t-1} + 4 \sqrt{\frac{\log T}{T_{i}-1}} \geq \mu^{*}.$
	
	Then
	\begin{align*}
		\mu^{*} &\leq \widehat{\mu}_{i,t-1} + 4 \sqrt{\frac{\log T}{T_{i}-1}}\\
		&\leq \mu_i + 2\sqrt{\frac{\log T}{T_{i}-1}} + 4 \sqrt{\frac{\log T}{T_{i}-1}}\,\,\,(\mbox{by $G_2$})\\
		&\leq \mu_i + 6\sqrt{\frac{\log T}{T_{i}-1}}
	\end{align*}
	
	Note that $\widehat{\mu}_{i,t-1}$ is the same as the empirical mean after seeing $T_i-1$ samples from arm $i$.
\end{proof}

\section{Regret Analysis of Explore-then-UCB: Nash Regret}\label{app:nash}

\begin{theorem}
Given a bandit instance with $k$ arms and time horizon $T$, choosing the exploration period $\Tilde{T} = 16\sqrt\frac{T k \log T}{\log k}$, the Nash regret of the \textsc{Explore-Then-UCB} algorithm satisfies
$$\mathrm{NR}_T = \Tilde{O}\left(\sqrt{\frac{k}{T}}\right).$$ 
\end{theorem}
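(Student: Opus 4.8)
The plan is to mirror the structure of the $p$-mean analyses (Theorems~\ref{thm:negative-regret} and~\ref{thm:positive-regret}) but to work multiplicatively rather than through a power. Since $\mathrm{NR}_T = \mu^* - \big(\prod_{t=1}^{T}\E[\mu_{I_t}]\big)^{1/T} = \mu^* - e^{S}$ with $S \triangleq \frac1T\sum_{t=1}^{T}\log\E[\mu_{I_t}]$, it suffices to prove a bound of the form $S \ge \log\mu^* - \epsilon$: then $e^{S} \ge \mu^* e^{-\epsilon} \ge \mu^*(1-\epsilon)$ yields $\mathrm{NR}_T \le \mu^*\epsilon$, and, crucially, any factor $1/\mu^*$ appearing in $\epsilon$ is cancelled by this leading $\mu^*$ (this is exactly the cancellation that makes the $\beta/\mu^*$ terms manageable in Theorem~\ref{thm:positive-regret}). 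Following the other proofs, I would split $S = S_1 + S_2$ along the two phases, $S_1$ averaging $\log\E[\mu_{I_t}]$ over $t \le \tilde{T}$ and $S_2$ over $t > \tilde{T}$. For Phase~I, uniform sampling gives $\E[\mu_{I_t}] \ge \mu^*/k$, hence $S_1 \ge \frac{\tilde{T}}{T}\big(\log\mu^* - \log k\big)$.

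For Phase~II, concavity of $\log$ (Jensen) and linearity of expectation give $S_2 \ge \frac1T\,\E\big[\sum_{t=\tilde{T}+1}^{T}\log\mu_{I_t}\big] = \frac1T\,\E\big[\sum_{i\in[\ell]} m_i\log\mu_i\big]$, reindexing the $\ell$ arms pulled in Phase~II with counts $m_i$ exactly as in Theorem~\ref{thm:positive-regret}. Conditioning on the good event $G$ (Lemma~\ref{lem:good_event_prob}), Lemma~\ref{lem:phase_two_good_arms} together with $G_1$ give $\mu_i \ge \mu^* - \beta_i$ where $\beta_i \triangleq 6\sqrt{\log T/(T_i-1)} \le 6\sqrt{2k\log T/\tilde{T}}$, and Remark~\ref{rem:mu_min_t_tilde_bound} ensures $\beta_i/\mu^* \le 1/2$; hence $\log\mu_i \ge \log\mu^* + \log(1-\beta_i/\mu^*) \ge \log\mu^* - 2\beta_i/\mu^*$. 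On $\neg G$ one only uses $\log\mu_i \ge \log\mu_{\min}$, and since $\mathbb{P}(\neg G) \le 2/T$ and $\sum_i m_i = T - \tilde{T}$, this event contributes only an additive $\tilde{O}(1/T)$ term (using $\log(1/\mu_{\min}) = O(\log T)$ under Assumption~\ref{assump:sufficiently_positive_rewards}). Here one must watch signs: since $\log\mu^* \le 0$ and $\log\mu_i$ may be negative, multiplying by $\mathbb{P}(G) \le 1$ or by $\mathbb{P}(\neg G)$ must each be checked to move the quantity in the favourable direction---routine, but easy to slip on.

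The decisive quantitative step---and the one I expect to be the main obstacle---is bounding $\sum_{i\in[\ell]} m_i\beta_i$ by $O(\sqrt{kT\log T})$ rather than by the cruder $O(T\beta)$ used in the $p<0$ analysis; the crude bound only gives a $\tilde{O}((k/T)^{1/4})$ rate, whereas the target here is $\tilde{O}(\sqrt{k/T})$. Under $G$, every Phase-II arm is also sampled at least $\tilde{T}/2k \ge 1$ times in Phase~I, so $m_i \le T_i - 1$ and therefore $m_i/\sqrt{T_i-1} \le \sqrt{m_i}$; Cauchy-Schwarz then gives $\sum_i m_i\beta_i \le 6\sqrt{\log T}\sum_i\sqrt{m_i} \le 6\sqrt{\log T}\,\sqrt{\ell\sum_i m_i} \le 6\sqrt{kT\log T}$. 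Substituting back, the two $\frac{\tilde{T}}{T}\log\mu^*$ contributions from $S_1$ and $S_2$ cancel, leaving $S \ge \log\mu^* - \frac{\tilde{T}\log k}{T} - \frac{12\sqrt{k\log T}}{\mu^*\sqrt{T}} - \tilde{O}(1/T)$, so that $\mathrm{NR}_T \le \mu^*\big(\frac{\tilde{T}\log k}{T} + \frac{12\sqrt{k\log T}}{\mu^*\sqrt{T}}\big) + \tilde{O}(1/T) \le \frac{\tilde{T}\log k}{T} + \frac{12\sqrt{k\log T}}{\sqrt{T}} + \tilde{O}(1/T)$. Plugging in $\tilde{T} = 16\sqrt{Tk\log T/\log k}$ makes the first term equal to $16\sqrt{k\log T\log k/T}$, and hence $\mathrm{NR}_T = \tilde{O}(\sqrt{k/T})$, as claimed.
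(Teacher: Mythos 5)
Your proposal is correct and follows essentially the same route as the paper's proof: the same Phase~I/Phase~II decomposition of the geometric mean, Jensen's inequality, conditioning on the good event $G$ with Lemma~\ref{lem:phase_two_good_arms}, and the step you single out as decisive---$m_i \le T_i - 1$ followed by Cauchy--Schwarz to get $\sum_i m_i \beta_i \le 6\sqrt{kT\log T}$---is exactly what the paper does. The only difference is cosmetic: you work in log-space using $\log(1-x) \ge -2x$ for $x \le 1/2$, whereas the paper manipulates the product directly via Claim~2 of \citet{barman2023fairness} (namely $(1-x)^{w} \ge 1 - 2wx$), which is the same inequality exponentiated.
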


\begin{proof}
Note that this analysis is similar to that of \cite{barman2023fairness} for the NCB algorithm, but with appropriate modifications for using the standard UCB bound in the second phase. We include it here in the appendix for completeness.
 
From the definition of Nash Regret, we have
\begin{equation}\label{eqn:nash_regret_def_app}
\mathrm{NR}_T = \mu^* - \left(\prod_{t=1}^{T} \mathbb{E}\left[\mu_{I_{t}}\right]\right)^{\frac{1}{T}}.
\end{equation}

We can decompose the Nash Social Welfare (geometric mean) as
\begin{equation}\label{eqn:nash_sw_decomp}
\left(\prod_{t=1}^{T} \mathbb{E}\left[\mu_{I_{t}}\right]\right)^{\frac{1}{T}} = \left(\prod_{t=1}^{\widetilde{T}} \mathbb{E}\left[\mu_{I_{t}}\right]\right)^{\frac{1}{T}}\left(\prod_{t=\widetilde{T}+1}^{T} \mathbb{E}\left[\mu_{I_{t}}\right]\right)^{\frac{1}{T}}.
\end{equation}
We will lower bound the two RHS factors in the decomposition separately.

Since we use the same $\Tilde{T}$, the Phase I (uniform exploration) of the \textsc{Explore-Then-UCB} is identical to that of the Nash Confidence Bound (NCB) algorithm \cite{barman2023fairness}.
Thus, the analysis for Phase I holds as is, i.e., 
\begin{equation}\label{eqn:nash_sw_phase_one}
\left(\prod_{t=1}^{\widetilde{T}} \mathbb{E}\left[\mu_{I_{t}}\right]\right)^{\frac{1}{T}}\geq \left(\mu^{*}\right)^{\frac{\tilde{T}}{T}}\left(1-\frac{16 \sqrt{k \log k \log T}}{\sqrt{T}}\right).
\end{equation}

In Phase II,
\begin{align*} 
\left(\prod_{t=\widetilde{T}+1}^{T} \mathbb{E}\left[\mu_{I_{t}}\right]\right)^{\frac{1}{T}} \geq \mathbb{E}\left[\left(\prod_{t=\widetilde{T}+1}^{T} \mu_{I_{t}}\right)^{\frac{1}{T}}\right] \geq \mathbb{E}\left[\left.\left(\prod_{t=\widetilde{T}+1}^{T} \mu_{I_{t}}\right)^{\frac{1}{T}} \right\rvert\, G\right] \mathbb{P}\{G\},
\end{align*}
where we use Jensen's inequality and the law of conditional expectation.

Let the arms pulled at least once in Phase II be denoted $\{1,\ldots,\ell\}$ (by reindexing) and let $m_i \geq 1$ denote the number of times arm $i \in [\ell]$ is pulled in Phase II. Note that $\sum_{i=1}^\ell m_i = T - \widetilde{T}$. So we can write
\begin{align*}
\mathbb{E}\left[\left.\left(\prod_{t=\widetilde{T}}^{T} \mu_{I_{t}}\right)^{\frac{1}{T}} \right\rvert\, G\right]=\mathbb{E}\left[\left.\left(\prod_{i=1}^{\ell} \mu_{i}^{\frac{m_{i}}{T}}\right) \right\rvert\, G\right]
\end{align*}

Hence using Lemma \ref{lem:phase_two_good_arms},
\begin{align*}
\mathbb{E}\left[\left.\left(\prod_{t=\widetilde{T}}^{T} \mu_{I_{t}}\right)^{\frac{1}{T}} \right\rvert\, G\right] &= \mathbb{E}\left[\left.\left(\prod_{i=1}^{\ell} \mu_{i}^{\frac{m_{i}}{T}}\right) \right\rvert\, G\right] \geq\mathbb{E}\left[\left.\prod_{i=1}^{\ell}\left(\mu^{*}-6 \sqrt{\frac{ \log T}{T_{i}-1}}\right)^{\frac{m_{i}}{T}} \right\rvert\, G\right]\\
&\geq (\mu^{*})^{1 - \frac{\widetilde{T}}{T}}\mathbb{E}\left[\left.\prod_{i=1}^{\ell}\left(1 - \frac{6}{\mu^{*}} \sqrt{\frac{ \log T}{T_{i}-1}}\right)^{\frac{m_{i}}{T}} \right\rvert\, G\right]
\end{align*}

Thus, we get
\begin{equation}\label{eqn:nash_phase_two_bound}
\left(\prod_{t=\widetilde{T}+1}^{T} \mathbb{E}\left[\mu_{I_{t}}\right]\right)^{\frac{1}{T}} \geq (\mu^{*})^{1 - \frac{\widetilde{T}}{T}}\mathbb{E}\left[\left.\prod_{i=1}^{\ell}\left(1 - \frac{6}{\mu^{*}} \sqrt{\frac{ \log T}{T_{i}-1}}\right)^{\frac{m_{i}}{T}} \right\rvert\, G\right]\,\mathbb{P}\{G\}.
\end{equation}

From Remark \ref{rem:mu_min_t_tilde_bound} and the good event $G_1$ (which gives $T_i - 1 \geq \Tilde{T}/2k$), we have that
\begin{equation}\label{eqn:beta_mu_star_ratio_bound}
\frac{6}{\mu^*}\sqrt{\frac{\log T}{T_i - 1}} \leq \frac{6}{\mu^*}\sqrt{\frac{2k\log T}{\Tilde{T}}} \leq \frac{6}{64}\sqrt\frac{\Tilde{T}}{2 k \log T}\sqrt{\frac{2k\log T}{\Tilde{T}}} < \frac{1}{2}.
\end{equation}

So we can apply Claim 2 from \cite{barman2023fairness} to get 

\begin{align*}
\mathbb{E}\left[\left.\prod_{i=1}^{\ell}\left(1-\frac{6}{\mu^{*}} \sqrt{\frac{\log T}{\left(T_{i}-1\right)}}\right)^{\frac{m_{i}}{T}} \right\rvert\, G\right] & \geq \mathbb{E}\left[\left.\prod_{i=1}^{\ell}\left(1-\frac{12 m_{i}}{\mu^{*}T} \sqrt{\frac{\log T}{\left(T_{i}-1\right)}}\right) \right\rvert\, G\right]\\
& \geq \mathbb{E}\left[\left.\prod_{i=1}^{\ell}\left(1-\frac{12}{\mu^{*}T} \sqrt{\frac{m_{i} \log T}{1}}\right) \right\rvert\, G\right]
\end{align*}

Where we use the fact that $T_i - 1 \geq m_i$. Now using the inequality $(1-x)(1-y) \geq 1-x-y$ for all $x,y \geq 0$, we can write

\begin{align*}
\mathbb{E}\left[\left.\prod_{i=1}^{\ell}\left(1-\frac{12}{\mu^*T} \sqrt{\frac{m_{i} \log T}{\mu^{*}}}\right) \right\rvert\, G\right] &\geq \mathbb{E}\left[\left.1-\sum_{i=1}^{\ell}\left(\frac{12}{\mu^{*}T} \sqrt{\frac{m_{i} \log T}{1}}\right) \right\rvert\, G\right] \\ 
&=1-\left(\frac{12}{\mu^{*} T} \sqrt{\frac{\log T}{1}}\right) \mathbb{E}\left[\sum_{i=1}^{\ell} \sqrt{m_{i}} \mid G\right] \\ 
&\geq 1-\left(\frac{12}{\mu^{*} T} \sqrt{\frac{\log T}{1}}\right) \mathbb{E}\left[\sqrt{\ell} \sqrt{\sum_{i=1}^{\ell} m_{i}} \mid G\right] \\
&\hspace*{20pt}\text {(using the Cauchy-Schwarz inequality) } \\
&\geq 1-\left(\frac{12}{\mu^{*}T} \sqrt{\frac{\log T}{1}}\right) \mathbb{E}[\sqrt{\ell T} \mid G] \\
&=1-\left(\frac{12}{\mu^{*}} \sqrt{\frac{\log T}{T}}\right) \mathbb{E}[\sqrt{\ell} \mid G] \\
&\geq 1-\left(\frac{12}{\mu^{*}} \sqrt{\frac{k \log T}{T}}\right)
\end{align*}

Substituting back into equation (\ref{eqn:nash_phase_two_bound}),
\begin{align*}
\left(\prod_{t=\widetilde{T}+1}^{T} \mathbb{E}\left[\mu_{I_{t}}\right]\right)^{\frac{1}{T}} \geq\left(\mu^{*}\right)^{1-\frac{\widetilde{T}}{T}}\left(1-\frac{12}{\mu^*} \sqrt{\frac{k \log T}{T}}\right) \mathbb{P}\{G\}
\end{align*}

Finally, bounding the NSW using equations (\ref{eqn:nash_sw_decomp}) and (\ref{eqn:nash_sw_phase_one}) along with the above bound, we get 

\begin{align*}
\left(\prod_{t=1}^{T} \mathbb{E}\left[\mu_{I_{t}}\right]\right)^{\frac{1}{T}} & \geq \mu^{*}\left(1-\frac{16 \sqrt{k \log k \log T}}{\sqrt{T}}\right)\left(1-\frac{12}{\mu^{*}} \sqrt{\frac{k \log T}{T}}\right) \mathbb{P}\{G\} \\ 
& \geq \mu^{*}\left(1-\underbrace{\frac{16 \sqrt{k \log k \log T}}{\sqrt{T}}}_{\triangleq \mathsf{A}}\right)\left(1- \underbrace{\frac{16}{\mu^{*}} \sqrt{\frac{k \log T}{T}}}_{\triangleq \mathsf{B}}\right)\left(1-\frac{2}{T}\right)\tag{from Lemma \ref{lem:good_event_prob}}\\
&\hspace*{-40pt}(\mbox{since $(1 - \mathsf{A})(1 - \mathsf{B}) \geq (1 - \frac{1}{\mu^{\ast}}\mathsf{A})(1 - \sqrt{\log k}\mathsf{B}) \geq \left(1 - 2 \frac{1}{\mu^\ast}\mathsf{A}\right) = \left(1 - 2 \sqrt{\log k}\mathsf{B}\right)$})\\
&\hspace*{-40pt}(\mbox{Note that $\frac{1}{\mu^*} \mathsf{A} = \sqrt{\log k} \mathsf{B} \leq 1/2$ for sufficiently large $T$, from Assumptions \ref{assump:sufficiently_positive_rewards} and \ref{assump:sufficiently_large_exploration}})\\
& \geq \mu^{*}\left(1-\frac{32 \sqrt{k \log k \log T}}{\mu^{*}\sqrt{T}}\right)\left(1-\frac{2}{T}\right) \\
& \geq \mu^{*}-\frac{32 \sqrt{k \log k \log T}}{\sqrt{T}}-\frac{2 \mu^{*}}{T} \\ 
& \geq \mu^{*}-\frac{32 \sqrt{k \log k \log T}}{\sqrt{T}}-\frac{2}{T}
\end{align*}

Thus, from \ref{eqn:nash_regret_def_app},
\[
\textrm{NR}_T = \mu^{*} - \left(\prod_{t=1}^{T} \mathbb{E}\left[\mu_{I_{t}}\right]\right)^{\frac{1}{T}} \leq \frac{32 \sqrt{k \log k \log T}}{\sqrt{T}} + \frac{4}{T} = \Tilde{O}\left(\sqrt{\frac{k}{T}}\right).
\]
\end{proof}

\section{Regret analysis of Explore-then-UCB: $p$-mean regret for $0 < p \leq 1$}\label{app:p_mean_pos}

In this section, we establish regret upper bounds for $p$-mean regret when $p$ is between $0$ and $1$. That is, we upper bound
$$(\mbox{$p$-mean Regret})\,\,\mathrm{R}^{p}_T = \mu^* - \left(\frac{\sum_{t=1}^T \left(\E_{I_t}[\mu_{I_t}]\right)^p}{T}\right)^\frac{1}{p}$$
for {$p \in (0, 1]$}. Here we give all the details in the proof, expanding the exposition in the main paper.

The following theorem establishes an ${\Tilde{O}\left(\sqrt{\frac{k}{T}}\right)}$ upper bound on $p$-mean regret when using the \textsc{Explore-Then-UCB} algorithm. 

\begin{theorem}
Given a bandit instance with $k$ arms, time horizon $T$, and regret parameter $p \in (0, 1]$, choosing the exploration period {$\Tilde{T} = 16 \sqrt{\frac{T k^p \log T}{\log k}}$}, the $p$-mean regret of the \textsc{Explore-Then-UCB} algorithm satisfies
\begin{align*}
\mathrm{R}^{p}_T &\leq \Tilde{O}\left(\sqrt{\frac{k}{T}}\right).
\end{align*}
\end{theorem}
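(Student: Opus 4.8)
I would follow the same template as the $p<0$ analysis (Theorem~\ref{thm:negative-regret}). First split the $p$-mean welfare across the two phases: set $x \triangleq \frac{1}{T}\sum_{t=1}^{\Tilde{T}}\E_{I_t}[\mu_{I_t}]^p$ and $y \triangleq \frac{1}{T}\sum_{t=\Tilde{T}+1}^{T}\E_{I_t}[\mu_{I_t}]^p$, so that $\mathrm{R}^{p}_T = \mu^* - (x+y)^{1/p}$; it then suffices to lower bound $x+y$ by a quantity of the form $(\mu^*)^p\bigl(1-o(1)\bigr)$. For $x$, uniform exploration in Phase~I gives $\E_{I_t}[\mu_{I_t}] = \tfrac1k\sum_i \mu_i \geq \mu^*/k$, and since $z\mapsto z^p$ is increasing this yields $x \geq (\mu^*)^p\Tilde{T}/(k^p T)$.

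For $y$, I would first use Jensen's inequality (concavity of $z\mapsto z^p$ on $\mathbb{R}_{>0}$ for $0<p\le 1$) and linearity of expectation to pass to $\E_{I_t}[\mu_{I_t}]^p \ge \E[\mu_{I_t}^p]$, then condition on the good event $G$ of Lemma~\ref{lem:good_event_prob}; since the means lie in $[0,1]$ this costs at most an additive $\tfrac{T-\Tilde{T}}{T}(1-\mathbb{P}\{G\}) \le 2/T$. Reindexing the arms pulled at least once in Phase~II as $\{1,\dots,\ell\}$ with $m_i$ pulls each, $\sum_i m_i = T-\Tilde{T}$ and $\ell \le k$, Lemma~\ref{lem:phase_two_good_arms} gives $\mu_i \ge \mu^*-\beta_i$ with $\beta_i = 6\sqrt{\log T/(T_i-1)}$, while Assumption~\ref{assump:sufficiently_positive_rewards} and Remark~\ref{rem:mu_min_t_tilde_bound} ensure $\mu^*-\beta_i>0$ and that $\beta_i/\mu^*$ is uniformly small. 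I would then lower bound $\sum_i m_i\mu_i^p$ via a first-order expansion of $z\mapsto z^p$ at $\mu^*$, controlling the aggregate error by Cauchy--Schwarz: $\sum_i m_i\beta_i = 6\sum_i m_i\sqrt{\log T/(T_i-1)} \le 6\sqrt{\log T}\sum_i\sqrt{m_i} \le 6\sqrt{\log T}\sqrt{\ell(T-\Tilde{T})} \le 6\sqrt{kT\log T}$ (using $T_i-1\ge m_i$). Combining the two phase estimates and simplifying exactly as in the $p<0$ case should give $(x+y)^{1/p} \ge \mu^* - \tfrac{(k^p-1)\Tilde{T}}{p\,k^p T} - \tfrac{6\sqrt{k\log T}}{\sqrt{T}} - \tfrac{4}{pT}$, where the passage through the outer exponent $1/p$ is handled by the Weierstrass inequality (valid since $1/p\ge 1$). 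Substituting $\Tilde{T}=16\sqrt{Tk^p\log T/\log k}$ and using $\tfrac{k^p-1}{p\,k^p}\le \log k$, the first term is $\tilde O\bigl(\sqrt{k^p\log T\log k/T}\bigr)$ and hence $\tilde O(\sqrt{k/T})$ since $k^p\le k$, the second is $\tilde O(\sqrt{k\log T/T})$, and the third is at most $\sqrt{k/T}$ once $p\ge 4/\sqrt{kT}$, giving $\mathrm{R}^p_T = \tilde O(\sqrt{k/T})$.

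The step I expect to be the main obstacle is propagating the Phase~II error $\beta$ through the outer exponent $1/p$ without losing the $\sqrt{k/T}$ rate: a crude uniform bound $\beta_i \le 6\sqrt{2k\log T/\Tilde{T}}$ survives the power only as a $\tilde O(k^{1/2}T^{-1/4})$ term, and naively factoring $(\mu^*)^p$ out before taking the power leaves a spurious $1/p$ multiplying the $\sqrt{k\log T/T}$ term. The fix is to keep the per-arm $\beta_i$, pay the Cauchy--Schwarz factor $\sqrt{kT\log T}$, and arrange the first-order expansion so that a factor of $p$ rides alongside $\beta$ and cancels the $1/p$ from the exponent --- exactly as in the $0<q<1$ branch of Theorem~\ref{thm:negative-regret}. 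As a consistency check, the same bound follows from the generalized-mean inequality: for any fixed run the ordering $M_0 \le M_p$ gives $\mathrm{R}^p_T \le \mathrm{NR}_T$, and rerunning the Nash analysis of Theorem~\ref{thm:nash-regret} with this (shorter) exploration period still yields $\tilde O(\sqrt{k/T})$.
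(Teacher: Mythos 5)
Your proposal is correct and follows essentially the same route as the paper's proof: the same $x,y$ decomposition, the lower bound $x \geq (\mu^*)^p\Tilde{T}/(k^pT)$ from uniform exploration, Jensen plus conditioning on $G$, the first-order expansion $(\mu^*-\beta_i)^p \geq (\mu^*)^p - p\beta_i(\mu^*)^{p-1}$ with the per-arm $\beta_i$ controlled via $T_i - 1 \geq m_i$ and Cauchy--Schwarz, and the Weierstrass inequality to pass through the $1/p$ exponent so that the $p$ from the expansion cancels the $1/p$. You have also correctly identified the one delicate point (avoiding the crude uniform bound on $\beta_i$, which would only give $\tilde{O}(k^{1/2}T^{-1/4})$), which is exactly how the paper handles it.
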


\begin{proof}
Towards analyzing $\mathrm{R}^{p}_T$, we define 
\begin{align*}
x \triangleq \frac{\sum_{t=1}^{\Tilde{T}}\E_{I_t}[\mu_{I_t}]^p}{T} \text{ and } y \triangleq \frac{\sum_{t=\Tilde{T}+1}^{T}\E_{I_t}[\mu_{I_t}]^p}{T},
\end{align*}

so that we have
\begin{equation}\label{eqn:preg_decomp_app}
	\mathrm{R}^{p}_T = \mu^* - (x+y)^{1/p}.
\end{equation}

Hence, to obtain an upper bound for $\mathrm{R}^{p}_T$, we need to lower bound $x$ and $y$. By uniform exploration in Phase I, we have that 
\begin{align*}
\E_{I_t}[\mu_{I_t}] \geq \frac{\mu^*}{k} \Leftrightarrow \E_{I_t}[\mu_{I_t}]^p \geq \frac{{\mu^*}^p}{k^p}
\end{align*}
So we have
\begin{equation}\label{eqn:preg_x_bound}
	x \geq \frac{{\mu^*}^p \Tilde{T}}{k^p T}.
\end{equation}

We know that by Jensen's inequality ($f(z) = z^p$ for $p \in (0, 1]$ is concave on $\mathbb{R}_{>0}$) and linearity of expectation
\begin{equation}\label{eqn:preg_y_bound}
y \geq \frac{\sum_{t=\Tilde{T}+1}^{T}\E_{I_t}[\mu_{I_t}^p]}{T}  = \frac{\E_{I_1,\ldots,I_T}[\sum_{t=\Tilde{T}+1}^{T} \mu_{I_t}^p]}{T}.
\end{equation}

By reindexing the arms so that $\{1,2,\ldots,\ell\}$ are the arms pulled at least once in Phase II, and letting $m_i$ be the number of times (the reindexed) arm $i$ is pulled in Phase II, we have 

\begin{align*}
\frac{\E [\sum_{t=\Tilde{T}+1}^T \mu_{I_t}^p ]}{T} = \frac{\E [\sum_{i=1}^{\ell} m_i \mu_{i}^p]}{T}
\end{align*}

By conditioning on the good event $G$ (see Lemma \ref{lem:good_event_prob}) and noting that $\sum_{i \in [\ell]} m_i = T - \Tilde{T}$, we have
\begin{align*}
y \geq \frac{\E [\sum_{i=1}^{\ell} m_i \mu_{i}^p \rvert\, G ] \mathbb{P}\{G\}}{T}
\end{align*}

Conditioned on event $G$, we have $\mu_i \geq \mu^* - \beta_i$ for all arms $i\in [\ell]$ pulled in Phase II where $\beta_i = 6\sqrt{\frac{\log T}{T_i-1}}$. Note that $\mu_i > 2\beta_i$ from the assumptions, and $(\mu^*-\beta_i)^p \geq ({\mu^*})^p - p\beta_i({\mu^*})^{p-1}$.

Hence, we have 
\begin{align}\label{eqn:preg_y_bound_2}
y \geq \frac{ \E [\sum_{i=1}^\ell m_i ({\mu^*}^p - p\beta_i({\mu^*})^{p-1}) \rvert\, G  ] \mathbb{P}\{G\}}{T}
\end{align}

Thus, from equations (\ref{eqn:preg_x_bound}) and (\ref{eqn:preg_y_bound_2}),
\begin{align*}
    x+y &\geq \frac{{\mu^*}^p \Tilde{T}}{k^p T} + \frac{  \E [\sum_{i=1}^\ell m_i ({\mu^*}^p - p\beta_i({\mu^*})^{p-1}) \rvert\, G  ] \mathbb{P}\{G\}}{T}\\
    & = \frac{{\mu^*}^p \Tilde{T}^\prime}{\sqrt{k}T} + \frac{ \E [\sum_{i=1}^\ell m_i ({\mu^*}^p - p\beta_i({\mu^*})^{p-1})\rvert\, G  ]\mathbb{P}\{G\}}{T}\tag{taking $\Tilde{T}^\prime = \Tilde{T} / k^{p-1/2}$}\\
    & = \frac{{\mu^*}^p}{1}\left(\frac{k^{-\frac{1}{2}}\Tilde{T}^\prime}{T}\right) + \frac{{\mu^*}^p}{1} \frac{ \E \left[\sum_{i=1}^\ell m_i \left(1-\frac{p\beta_i}{\mu^*}\right) \rvert\, G  \right] \cdot \mathbb{P}\{G\}}{T}\\
    & = \frac{{\mu^*}^p}{1}\left( \frac{ k^{-\frac{1}{2}}\Tilde{T}^\prime}{T} + \frac{ \E \left[\left(1-\frac{p\beta_i}{\mu^*}\right) \sum_{i=1}^\ell m_i\rvert\, G  \right] \cdot \mathbb{P}\{G\}}{T}\right)\\
	&\triangleq (\ast_1)\\
\end{align*}
Substituting for $\beta_i$, we get
\begin{align*}
    (\ast_1) & = \frac{{\mu^*}^p}{1}\left(\frac{k^{-\frac{1}{2}}\Tilde{T}^\prime}{T} + \frac{ \E \left[\sum_{i=1}^\ell m_i - \sum_{i=1}^\ell \frac{6p m_i}{\mu^*}\sqrt{\frac{\log T}{T_i-1}}\rvert\, G  \right]\mathbb{P}\{G\}}{T}\right) \\
    & \geq \frac{{\mu^*}^p}{1}\left( \frac{k^{-\frac{1}{2}}\Tilde{T}^\prime}{T} + \frac{ \E [\sum_{i=1}^\ell m_i - \sum_{i=1}^\ell \frac{6p}{\mu^*}\sqrt{m_i \log T} \rvert\, G  ]\mathbb{P}\{G\}}{T}\right) \tag{since $T_i \geq m_i + 1$}  \\
    & = \frac{{\mu^*}^p}{1}\left( \frac{k^{-\frac{1}{2}}\Tilde{T}^\prime}{T} + \frac{ \E [T-\Tilde{T} - \frac{6p\sqrt{\log T}}{\mu^*} \sum_{i=1}^\ell \sqrt{m_i} \rvert\, G  ]\mathbb{P}\{G\}}{T}\right)\\
	&\triangleq (\ast_2)
\end{align*}
Now, using Cauchy-Schwarz inequality and $\sum_{i\in[\ell]}m_i\leq T$, we get
\begin{align*}
 (\ast_2) & \geq \frac{{\mu^*}^p}{1}\left( \frac{k^{-\frac{1}{2}}\Tilde{T}^\prime}{T} + \frac{ \E [T-\Tilde{T} - \frac{6p\sqrt{\log T}}{\mu^*} \sqrt{\ell T} \rvert\, G  ]\mathbb{P}\{G\}}{T}\right)\\
    & \geq \frac{{\mu^*}^p}{1}\left( \frac{k^{-\frac{1}{2}}\Tilde{T}^\prime}{T} + \frac{ \E [T-\Tilde{T} - \frac{6p\sqrt{\log T}}{\mu^*} \sqrt{\ell T} \rvert\, G  ]}{T}\right)\mathbb{P}\{G\}\\
    & \geq \frac{{\mu^*}^p}{1}\left( \frac{k^{-\frac{1}{2}}\Tilde{T}^\prime}{T} + \frac{T-\Tilde{T}}{{T}} - \frac{6p\sqrt{k\log T}}{\sqrt{T}\mu^*} \right)\mathbb{P}\{G\}\\
    & \geq \frac{{\mu^*}^p}{1}\left( \frac{k^{-\frac{1}{2}}\Tilde{T}^\prime}{T} + \frac{T-\Tilde{T}^\prime k^{p-\frac{1}{2}}}{{T}} - \frac{6p\sqrt{k\log T}}{\sqrt{T}\mu^*} \right)\mathbb{P}\{G\}\tag{as $\Tilde{T}^\prime \triangleq \Tilde{T}/k^{p-1/2}$}\\
    & = \frac{{\mu^*}^p}{1}\left( 1- \frac{(k^p-1)\Tilde{T}^\prime}{\sqrt{k}T} - \frac{6p\sqrt{k\log T}}{\sqrt{T}\mu^*} \right)\mathbb{P}\{G\}\\
    &\geq \frac{{\mu^*}^p}{1}\left( 1- \left(\frac{(k^p-1)\Tilde{T}^\prime}{\sqrt{k}T} + \frac{6p\sqrt{k\log T}}{\sqrt{T}\mu^*} \right)\right)\left(1-\frac{4}{T}\right)
\end{align*}

Then, 

\begin{align}\label{eqn:xplusy_pow_oneoverp_bound}
(x+y)^{\frac{1}{p}} &\geq \frac{{\mu^*}}{1}\left(1 - \left(\frac{(k^p-1)\Tilde{T}^\prime}{\sqrt{k}T} + \frac{6p\sqrt{k\log T}}{\sqrt{T}\mu^*} \right)\right)^{\frac{1}{p}}\left(1-\frac{4}{T}\right)^{\frac{1}{p}}
\end{align}

We can apply the Weierstrass inequality --- $\prod_i(1 - x_i)^{w_i} \geq (1 - \sum_i w_i x_i)$ as long as $x_i \in [0, 1]$ and $w_i \geq 1$ --- on (\ref{eqn:xplusy_pow_oneoverp_bound}) to get
\begin{align*}
(x+y)^{\frac{1}{p}} \geq {\mu^*} - \frac{(k^p-1)\Tilde{T'}}{p\sqrt{k}T} -\frac{6\sqrt{k\log T}}{\sqrt{T}} - \frac{4}{pT}
\end{align*}

Then, substituting for $\Tilde{T}$,
\begin{align*}
\textrm{R}^p_T &= \mu^* - (x+y)^{1/p}\\
&\leq \frac{(k^p-1)\Tilde{T'}}{p\sqrt{k}T} + \frac{6\sqrt{k\log T}}{\sqrt{T}} + \frac{4}{pT}\\
&= \frac{(k^p-1)\Tilde{T}}{pk^{p}T} + \frac{6\sqrt{k\log T}}{\sqrt{T}} + \frac{4}{pT}\\
&= \frac{16(k^p-1)\sqrt{T k^p \log(T)}}{pk^{p}T\sqrt{\log k}} + \frac{6\sqrt{k\log T}}{\sqrt{T}} + \frac{4}{pT}\\
&\leq \Tilde{O}\left(\sqrt{\frac{k}{T}}\right).
\end{align*}
\end{proof}
\section{Numerical Experiments}\label{app:experiments}

To further validate our theoretical results about Explore-then-UCB by considering the practical effectiveness of the algorithm, we perform some basic experiments on synthetic bandit instances. The code for the experiments can be found in the repository: \url{https://github.com/philips-george/p-mean-regret-stochastic-bandits}.

\paragraph*{Instances} We consider the following synthetic stochastic bandit instances, each with $k = 50$ arms:
\begin{enumerate}
\item \textbf{Bernoulli}: The reward for each arm $i$ is distributed according to $\mathsf{Bernoulli}(\rho_i)$, so that the expected reward $\mu_i$ for arm $i$ is $\rho_i$. The means $\rho_1, \ldots, \rho_k$ are chosen independently from the uniform distribution on $[0.005, 1)$.
\item \textbf{Triangular}: The reward for each arm $i$ is drawn from a triangular distribution on $(0, 1)$ with mode $\gamma_i$. The expected reward for arm $i$ will thus be $\frac{\gamma_i + 1}{3}$. Each mode $\gamma_i$ is chosen independently from the uniform distribution on $[0.005, 0.995)$.
\item \textbf{Beta}: The reward for each arm $i$ is drawn from the Beta distribution on $(0, 1)$ with parameters $(\alpha_i, \beta_i)$. The expected reward for each arm $i$ will thus be $\frac{\alpha_i}{\alpha_i + \beta_i}$. The $\alpha_i$s and $\beta_i$s are chosen independently from the uniform distribution on $[0.005, 0.995)$.
\item \textbf{Uniform}: The reward for arm $i$ is drawn from the uniform distribution on $[l_i, u_i]$; thus $\mu_i$ will be $\frac{l_i+u_i}{2}$. The parameters $l_i$ are drawn independently from the uniform distribution on $[0.005, 0.995)$. The upper bounds $u_i$ are drawn independently of each other uniformly from the intervals $[\ell_i + 0.001, 1)$.

\paragraph{Experiments}: We evaluate three algorithms, labelled \textsf{EUCB} (which is our Explore-then-UCB algorithm), \textsf{NCB} (which is the Nash Confidence Bound algorithm from \cite{barman2023fairness}) and \textsf{UCB1} (standard UCB with one round of round-robin exploration).

We run each of these algorithms on four instances as described above (the randomness in instance generation has been fixed using a seed in the code). We run each algorithm $30$ times and take the average of the regrets. In each run, we use $T=20,000$ rounds for Triangular, Beta and Uniform instances, and $T=100,000$ for Bernoulli instances (labeled Bernoulli-100k in the results).

We report the $p$-mean regrets by running the algorithms separately for  $p = 1$ (the usual average regret), $p = 0.5$, $p = 0$ (Nash regret), $p = -0.5$, $p = -1$ (Harmonic regret), and $p = -2$.

The results of the experiments are shown in Table \ref{tab:expt_results}. We highlight in bold the results where our algorithm (\textsf{EUCB}) performs at least as well as the two other algorithms. Note that the general trend is that our algorithm outperforms or is as good as the other baselines for negative $p$ (especiallly so in the case of Bernoulli-100k instances where we use a larger number of rounds), whereas for $p = 1$ and $0.5$, the basic \textsf{UCB1} outperforms both our algorithm and \textsf{NCB}. This is to be expected since the increased uniform exploration which is part of both \textsf{EUCB} and \textsf{NCB} acts as a handicap when $p$ is sufficiently positive.

\begin{table}[t]
\caption{Experiment results.\label{tab:expt_results}}
\centering
\begin{tabular}{@{}llllll@{}}
\toprule
p                     & Algorithm & Bernoulli-100k & Triangular     & Beta           & Uniform        \\ \midrule
\multirow{3}{*}{1}    & UCB1      & 0.214          & 0.139          & 0.306          & 0.240          \\
                      & NCB       & 0.304          & 0.154          & 0.400          & 0.333          \\
                      & EUCB      & 0.320          & 0.155          & 0.399          & 0.332          \\ \midrule
\multirow{3}{*}{0.5}  & UCB1      & 0.219          & 0.139          & 0.310          & 0.242          \\
                      & NCB       & 0.229          & 0.155          & 0.402          & 0.333          \\
                      & EUCB      & 0.322          & \textbf{0.139} & 0.312          & 0.247          \\ \midrule
\multirow{3}{*}{0}    & UCB1      & 0.964          & 0.653          & 0.909          & 0.940          \\
                      & NCB       & 0.964          & 0.653          & 0.909          & 0.940          \\
                      & EUCB      & \textbf{0.964} & \textbf{0.653} & \textbf{0.909} & \textbf{0.940} \\ \midrule
\multirow{3}{*}{-0.5} & UCB1      & 0.964          & 0.141          & 0.315          & 0.245          \\
                      & NCB       & 0.359          & 0.157          & 0.407          & 0.335          \\
                      & EUCB      & \textbf{0.230} & \textbf{0.141} & 0.316          & \textbf{0.244} \\ \midrule
\multirow{3}{*}{-1}   & UCB1      & 0.964          & 0.142          & 0.320          & 0.247          \\
                      & NCB       & 0.377          & 0.142          & 0.409          & 0.244          \\
                      & EUCB      & \textbf{0.232} & 0.158          & \textbf{0.317} & 0.336          \\ \midrule
\multirow{3}{*}{-2}   & UCB1      & 0.964          & 0.144          & 0.325          & 0.255          \\
                      & NCB       & 0.409          & 0.160          & 0.415          & 0.337          \\
                      & EUCB      & \textbf{0.242} & \textbf{0.144} & \textbf{0.324} & \textbf{0.247} \\ \bottomrule
\end{tabular}
\end{table}

\end{enumerate}
\end{document}